\documentclass[10pt,letterpaper]{paper}

\usepackage{lmodern}
\usepackage{silence}
\usepackage[authoryear, round]{natbib}
\usepackage[utf8]{inputenc}
\usepackage[T1]{fontenc}
\usepackage{microtype}

\makeatletter
\@namedef{ver@everyshi.sty}{}
\makeatother

\definecolor{custom_green}{rgb}{0.0, 0.5, 0.0}
\definecolor{custom_red}{rgb}{1.0, 0.01, 0.24}
\definecolor{title_blue}{HTML}{204899}   
\definecolor{cite_blue}{HTML}{044dc1}
\definecolor{cite_purple}{HTML}{7406a7}

\makeatletter
\renewcommand\footnoterule{%
  \kern 15\p@
  \hrule \@width 2in \kern 2.6\p@
  \vspace{4pt}
}
\makeatother

\usepackage{listings}
\usepackage[most]{tcolorbox}

\tcbset{
  promptstyle/.style={colback=gray!4, colframe=gray!55, coltitle=black, colbacktitle=gray!15},
  hadstyle/.style={colback=green!4, colframe=green!45!black, coltitle=black, colbacktitle=green!12},
  basestyle/.style={colback=red!3, colframe=red!45!black, coltitle=black, colbacktitle=red!10},
  envstyle/.style={colback=yellow!6, colframe=yellow!55!black, coltitle=black, colbacktitle=yellow!18}
}

\newtcblisting{plist}[2][]{enhanced, listing only, sharp corners=south, boxrule=0.5pt,
  left=2pt, right=2pt, top=1pt, bottom=1pt, before skip=0pt, after skip=3pt,
  fonttitle=\sffamily\scriptsize\bfseries, title={#2},
  listing options={basicstyle=\ttfamily\tiny, breaklines=true, breakindent=0pt, columns=fullflexible}, #1}

\usepackage[T1]{fontenc}
\usepackage{amsmath, amssymb, amsthm}
\usepackage{mathtools}
\usepackage{bbm}
\usepackage{dsfont}

\usepackage{tabularx}
\definecolor{hadrow}{HTML}{E8EEF8}

\usepackage[dvipsnames,table]{xcolor}
\definecolor{darkpink}{RGB}{199,21,140}

\usepackage{multirow}
\usepackage{graphicx}

\newcommand{\sk}{\cellcolor{blue!5}}

\usepackage{calc}
\newsavebox\CBox

\definecolor{decreasecolor}{HTML}{B22222}  
\definecolor{increasecolor}{HTML}{1F4E79} 

\usepackage{tcolorbox}
\tcbuselibrary{skins}
\definecolor{promptbg}{HTML}{F5F5F5}
\definecolor{promptfr}{HTML}{9E9E9E}
\definecolor{prompttitle}{HTML}{616161}

\usepackage[table]{xcolor}
\definecolor{hstate}{HTML}{FBF3D9}
\definecolor{hbehav}{HTML}{F6E3A8}
\definecolor{hgray}{HTML}{5F5F5A}

\usepackage{graphicx}
\usepackage[format=hang]{subcaption}

\usepackage{booktabs, array, wrapfig}
\usepackage{pifont}

\usepackage{listings}
\usepackage{fancyvrb}
\usepackage{algorithm}
\usepackage{algpseudocode}
\fvset{fontsize=\small}

\definecolor{citecolor}{RGB}{0,102,204}
\definecolor{linkcolor}{RGB}{190,105,30}
\definecolor{urlcolor}{RGB}{199,21,133}

\usepackage{hyperref}
\hypersetup{colorlinks,linktoc=all}
\usepackage[all]{hypcap}
\hypersetup{citecolor=citecolor}
\hypersetup{linkcolor=linkcolor}
\hypersetup{urlcolor=urlcolor}
\usepackage[nameinlink,capitalise,noabbrev]{cleveref}
\creflabelformat{equation}{#2\textup{#1}#3} 
\crefname{observation}{Observation}{Observations}
\Crefname{observation}{Observation}{Observations}
\crefname{section}{Section}{Sections}
\Crefname{section}{Section}{Sections}
\crefname{subsection}{Section}{Sections}
\Crefname{subsection}{Section}{Sections}
\crefname{subsubsection}{Section}{Sections}
\Crefname{subsubsection}{Section}{Sections}
\crefname{appendix}{Appendix}{Appendices}
\Crefname{appendix}{Appendix}{Appendices}
\crefname{subappendix}{Appendix}{Appendices}
\Crefname{subappendix}{Appendix}{Appendices}
\crefname{subsubappendix}{Appendix}{Appendices}
\Crefname{subsubappendix}{Appendix}{Appendices}

\lstdefinestyle{mystyle}{
    commentstyle=\color{OliveGreen},
    numberstyle=\tiny\color{black!60},
    stringstyle=\color{BrickRed},
    basicstyle=\ttfamily\scriptsize,
    breakatwhitespace=false,
    breaklines=true,
    captionpos=b,
    keepspaces=true,
    numbers=none,
    numbersep=5pt,
    showspaces=false,
    showstringspaces=false,
    showtabs=false,
    tabsize=2
}
\usepackage{amsmath,amsfonts,bm}

\def\eqref#1{equation~\ref{#1}}

\def\1{\bm{1}}

\DeclareMathAlphabet{\mathsfit}{\encodingdefault}{\sfdefault}{m}{sl}
\SetMathAlphabet{\mathsfit}{bold}{\encodingdefault}{\sfdefault}{bx}{n}

\usepackage{lipsum}
\usepackage{todonotes}

\usepackage{amsthm}
\newtheorem{proposition}{Proposition}[section]
\newtheorem{corollary}{Corollary}[section]
\theoremstyle{definition}
\newtheorem{definition}{Definition}[section]
\newtheorem{observation}{Observation}[section]
\theoremstyle{plain}
\AddToHook{env/proposition/begin}{\crefalias{section}{proposition}}
\AddToHook{env/corollary/begin}{\crefalias{section}{corollary}}
\AddToHook{env/definition/begin}{\crefalias{section}{definition}}
\AddToHook{env/observation/begin}{\crefalias{section}{observation}}

\usepackage{hyperref}
\usepackage{url}
\usepackage{kotex}
\usepackage{tabularx}
\usepackage{booktabs, multirow}
\usepackage{array}
\usepackage{graphicx}
\usepackage{subcaption}

\hypersetup{
    colorlinks = true,
    citecolor = {cite_blue},
    linkcolor = {cite_purple},
    urlcolor = {cite_purple},
}

\reportnumber{}

\title{Harness-Aware Distillation for Small Language Model Agents}

\author[1]{Moonseok Choi}
\author[2]{Taehong Moon}
\author[1,$\dagger$]{Giung Nam}
\author[1,$\dagger$]{Juho Lee}

\makeatletter
\renewcommand{\AB@affilsepx}{,\protect\hspace{0.5em}\protect\Affilfont}
\makeatother

\affil[1]{KAIST AI}
\affil[2]{Independent researcher}

\correspondingauthor{%
$^\dagger$ Co-corresponding authors
\newline
Correspondence to:
{
\email{\{ms.choi, giung, juholee\}@kaist.ac.kr}}.%
}

\begin{abstract}
\textbf{Abstract:}
Language model agents are deployed with a harness, the software around the model that manages its context, tools, and feedback.
When such an agent is distilled into a smaller one, the harness stays in place, so the student mainly needs the teacher-specific abilities that the harness cannot provide, such as acting correctly on harness information.
Standard distillation, however, imitates the teacher's full outputs and treats the harness as part of the input.
We propose Harness-Aware Distillation (HAD), which focuses distillation on what the teacher adds beyond the harness.
HAD complements on-policy distillation with two components: an action preference that contrasts the same teacher's actions with and without the harness information, scored after the student's own reasoning, and a validity check that drops preference pairs whose preferred action contradicts the harness records.
We show that the contrast gives the student information that imitating the teacher alone cannot provide, and HAD needs no task rewards, success labels, or future information.
Across multiple long-horizon agent benchmarks and models, HAD outperforms on-policy distillation baselines with the same fixed harness.
Our analysis shows that HAD enters fewer unproductive loops and recovers from errors more often than the baselines, and suggests that it adaptively keeps learnable feedback in its weights while reading state information from the harness. The project page is available at \href{https://moon4sake.github.io/harness-aware-distillation}{link}.
\end{abstract}

\begin{document}

\maketitle

\section{Introduction}

\begin{wrapfigure}{r}{0.5\textwidth}
  \centering
  \vspace{-1.0em}
  \includegraphics[width=\linewidth]{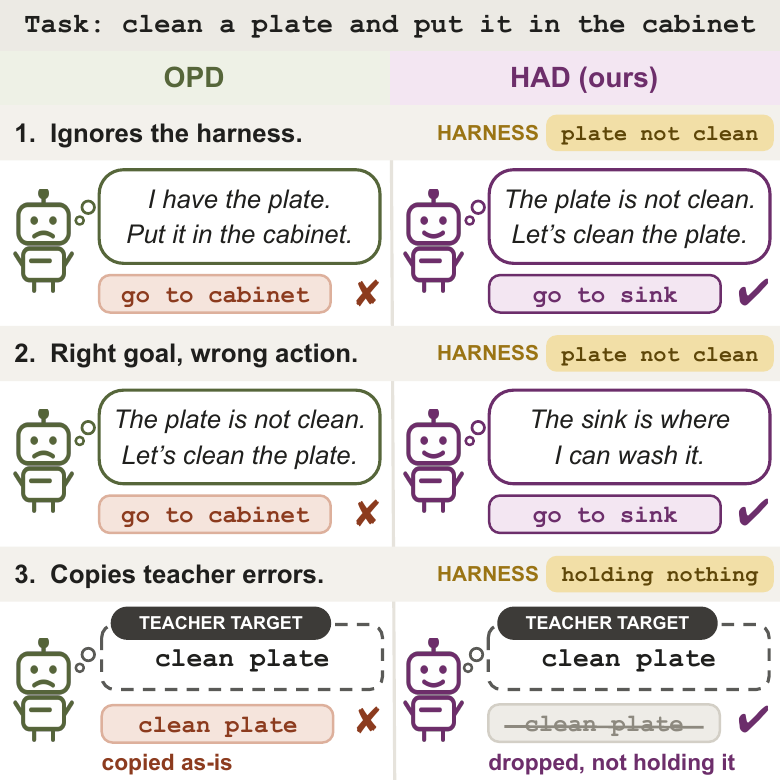}
  \vspace{-2.0em}
    \caption{OPD versus HAD on the three failures of OPD. HAD acts on the harness information in the first two rows and drops the invalid teacher target from its preference term in the last.}
\label{fig:difference}
\end{wrapfigure}

Language model agents solve long-horizon tasks through repeated interactions with an environment and its tools~\citep{yao2023react,yang2024sweagent}.
They are usually deployed inside a \emph{harness}, the software around the model that manages its context, tools, and feedback~\citep{wang2025openhands,lee2026metaharness}.
A harness can, for example, track the environment state, list the available actions, and report when the agent repeats an action without effect.
Changing only the harness, with the model fixed, can improve agent performance substantially~\citep{yang2024sweagent,lee2026metaharness}, so practical deployments assume a harness-equipped agent.

At the same time, the high cost of large models gives a strong incentive to distill their capabilities into smaller agents~\citep{belcak2025small,zeng2024agenttuning}.
On-policy distillation (OPD) is a common choice, as it provides teacher supervision at the states the student itself visits~\citep{agarwal2024gkd,lu2025onpolicy,tcod2026,sageopd2026}, but existing methods do not explicitly account for the harness.
Distilling a harness-equipped agent is a problem between two systems that share the same harness.
The harness stays fixed when the teacher is replaced by a smaller model, so the student does not need to learn what the harness already provides.
What it needs is the teacher's own ability to act on what the harness reports.
A natural approach is to run OPD with the same harness for the teacher and the student, yet this approach does not reliably transfer the ability.
As \cref{fig:motivation} shows, adding the harness to OPD changes the student's performance inconsistently across methods.
The agreement between the student's actions and the harness records, which we call harness utilization, also shows limited improvement and can even fall below that of the untrained student.

We attribute such limited gains to three failures in the supervision that OPD provides, illustrated in \cref{fig:difference}.
First, \emph{OPD leaves the teacher's use of the harness implicit}.
\Cref{tab:main} shows that part of the teacher's performance comes from the harness, and keeping this benefit requires the student to learn when harness information should change its decision.
OPD supervises every step in the same way, so it does not mark the few steps where the harness changes the teacher's action, and it never shows what the teacher would choose without the harness.
Second, \emph{OPD does not separate what the harness provides from what the teacher adds}.
The harness supplies information and feedback, but deciding how to act on them is an ability of the teacher that the harness cannot provide.
In the middle row of \cref{fig:difference}, the student reads that the plate is not clean, yet goes to the cabinet instead of the sink.
Third, \emph{OPD accepts teacher targets that contradict the harness records}.
Prior work shows that teachers can be less reliable at student-visited states~\citep{tcod2026,li2026guidedopd,sageopd2026}.
The harness records can reveal teacher errors, such as a target that requires an object the agent does not hold, but OPD does not check teacher targets against the harness records.

Our key insight is that the shared harness can itself guide distillation, as it shows which teacher decisions depend on it and its records reveal teacher targets that contradict observed facts.
We propose \textbf{Harness-Aware Distillation (HAD)}, which adds two components to response distillation, our term for on-policy distillation of the teacher's full response with the harness enabled, as illustrated in \cref{fig:schematic}.
\emph{Harness-aware action preference learning} queries the same teacher with and without the harness information and trains the student to prefer the action chosen with the harness, scored after the student's own reasoning.
This contrast is active only at states where the harness changes the teacher's action, and as we show in \cref{sec:had-theory}, it carries information that imitating the teacher's response alone does not provide.
\emph{Harness-aware action validity filtering} drops preference pairs whose preferred action contradicts the harness records, while response distillation is unchanged.
HAD uses no task rewards, success labels, or future information.

\begin{figure}[t]
    \centering
    \includegraphics[width=0.9\linewidth]{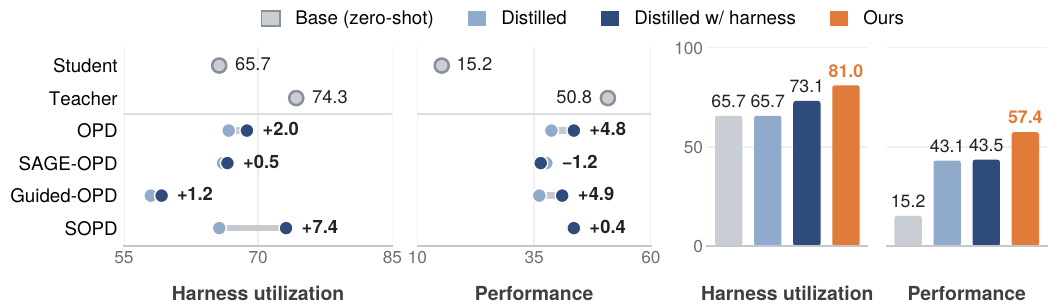}
\caption{Harness utilization and performance of distilled students on ALFWorld. \textbf{Left:} Each OPD method distilled without (light) and with (dark) the harness. \textbf{Right:} The zero-shot student, a distilled student trained without and with the harness, and HAD. Adding the harness to distillation improves harness utilization (65.7\%$\rightarrow$73.1\%) but not performance (43.1\%$\rightarrow$43.5\%), while HAD improves both (65.7\%$\rightarrow$81.0\%; 43.1\%$\rightarrow$57.4\%).}
    \label{fig:motivation}
    \vspace{-0.8em}
\end{figure}

Our contributions are as follows:
\begin{itemize}
    \setlength\itemsep{0pt}
    \item We view the distillation of a harness-equipped agent as distillation between two systems that share the harness, so the student should learn the teacher-specific abilities that the harness cannot provide.
    We show that on-policy distillation with the shared harness does not reliably teach the student what the teacher adds beyond the harness.
    \item We propose HAD, which queries the same teacher with and without the harness and teaches the student to prefer the action chosen with the harness.
    Since the teacher can still make mistakes, a validity check drops pairs whose preferred action contradicts the harness records.
    We show in \cref{sec:had-theory} that the contrast adds information that imitating the teacher alone cannot provide.
    \item HAD outperforms on-policy distillation baselines with the same harness across multiple long-horizon agent benchmarks and models.
    Our analysis in \cref{sec:analysis} shows that HAD learns to recover from stalls, unlike the baselines, and suggests that it keeps learnable feedback in its weights while reading state information from the harness.
\end{itemize}
\section{Related Work}

\paragraph{Distilling language model agents.}
Language model agents combine model predictions with external actions and feedback~\citep{yao2023react,shinn2023reflexion}.
Building on knowledge distillation~\citep{hinton2015distilling,kim2016sequence}, agent distillation transfers reasoning and tool-use behavior to smaller models~\citep{chen2023fireact,kang2025distilling}. Structured supervision further distinguishes between reasoning and action spans~\citep{liu2026structured}.
To reduce the mismatch between teacher-generated training data and student behavior, on-policy distillation supervises student-generated sequences using teacher feedback~\citep{agarwal2024gkd,gu2024minillm}. In multi-turn settings, this paradigm has been extended with step-level supervision~\citep{sopd2026}, temporal curricula~\citep{tcod2026}, and selective teacher intervention~\citep{sageopd2026}.

\paragraph{Learning with external context and agent harnesses.}
External context and agent harnesses can support inference and provide supervision during training. Context distillation and harness self-distillation aim to internalize contextual knowledge or harness-augmented reasoning into model weights~\citep{ye2026onpolicycontext,zhao2026ophsd}. Other work studies the interplay between harness design and post-training, including training compact agents to operate across varying harness configurations~\citep{kim2026interplay}.
External guidance can also shape the supervision signal itself: for example, rollout returns can determine the direction of distillation between skill-conditioned and unconditioned views~\citep{tu2026ucob}.
Our focus is instead on explicitly supervising how smaller agents use information provided by the harness, while retaining the harness at deployment time.

\paragraph{Contrastive supervision and preference-based distillation.}
Preference learning trains models to favor one candidate over another, using either a reference-based objective or a reference-free objective combined with imitation~\citep{rafailov2023dpo,xu2024contrastive}. These approaches also differ in how they construct preference pairs. Prior work constructs synthetic pairs through contrasting prompts~\citep{yang2024rlcd}, compares answers generated with and without retrieval using answer correctness~\citep{yan2025rpo}, or forms preferences between teacher and student generations~\citep{li2024dpkd,yu2026pbsd}.
Our method constructs action preferences from the same teacher with and without harness information at student-visited states. We treat the harness-conditioned action as preferred and exclude pairs when that action contradicts harness records. This supervision requires neither task rewards nor success labels.

\section{Method}\label{sec:method}

\subsection{Harness-Aware Distillation}\label{sec:had}

We consider a multi-turn agent interacting with an environment over a horizon $T$.
For a task $u\sim\mathcal{D}$, an external harness augments each environment observation $o_t$ with supplementary information $h_t$, computed deterministically from the environment state and interaction history.
HAD makes no assumption about what $h_t$ contains, and only requires that it is computed deterministically from the environment state and the interaction history.
Writing $\tilde{o}_t=(o_t,h_t)$, we define the \emph{harness-induced history} $\tilde{x}_t$ and its \emph{stripped history} $x_t$ as
\begin{equation}
\tilde{x}_t=(u,\tilde{o}_0,\tilde{y}_0,\ldots,\tilde{y}_{t-1},\tilde{o}_t),
\qquad
x_t=(u,o_0,\tilde{y}_0,\ldots,\tilde{y}_{t-1},o_t).
\label{eq:had-histories}
\end{equation}
The agent conditions on $\tilde{x}_t$ to generate a response $\tilde{y}_t=(\tilde{z}_t,\tilde{a}_t)$ consisting of a reasoning trace and an action.
The environment executes $\tilde{a}_t$ and returns the next observation $o_{t+1}$.
Stripping removes only $h_0,\ldots,h_t$, not their influence on the realized observations and responses; $x_t$ is therefore not a separately generated harness-free trajectory.

\paragraph{Harness-equipped response distillation.}
Let $\pi_\phi$ be a teacher and $\pi_\theta$ a student.
With the harness enabled for both the student and the teacher, we collect student rollouts matching deployment and query the teacher at each visited history.
\begin{equation}
    \tilde{y}_t^\theta=(\tilde{z}_t^\theta,\tilde{a}_t^\theta)
    \sim\pi_\theta(\cdot\mid\tilde{x}_t^\theta),
    \qquad
    \tilde{y}_t^\phi=(\tilde{z}_t^\phi,\tilde{a}_t^\phi)
    \sim\pi_\phi(\cdot\mid\tilde{x}_t^\theta).
\end{equation}
We distill the teacher's complete response using its mean token negative log-likelihood:
\begin{equation}
    \ell_{\mathrm{dist},t}(\theta)
    =-\frac{1}{|\tilde{y}_t^\phi|}
    \sum_{j=1}^{|\tilde{y}_t^\phi|}
    \log\pi_\theta\!\left(
        \tilde{y}_{t,j}^\phi
        \mid\tilde{x}_t^\theta,\tilde{y}_{t,<j}^\phi
    \right).
    \label{eq:had-response-distillation}
\end{equation}

\paragraph{Harness-aware action preference learning.}
Distilling at student-visited states addresses the distribution shift of training on teacher trajectories, but does not explicitly supervise harness-induced behavioral differences.
We therefore query the same teacher on the stripped history:
\begin{equation}
    y_t^\phi=(z_t^\phi,a_t^\phi)
    \sim\pi_\phi(\cdot\mid x_t^\theta).
\end{equation}
Given the visited history, the two teacher responses, with and without the harness, are independent of each other and of the student's current reasoning.
Using access to the explicit harness information as a quality proxy, we prefer $\smash{\tilde{a}_t^\phi}$ over $\smash{a_t^\phi}$ and score both under the same student reasoning $\smash{\tilde{z}_t^\theta}$: 
\begin{align}
    \Delta_t(\theta)
    =\frac{1}{|\tilde{a}_t^\phi|}
    \log\pi_\theta\!\left(
        \tilde{a}_t^\phi\mid\tilde{x}_t^\theta,\tilde{z}_t^\theta
    \right)
    -\frac{1}{|a_t^\phi|}
    \log\pi_\theta\!\left(
        a_t^\phi\mid\tilde{x}_t^\theta,\tilde{z}_t^\theta
    \right).
\end{align}
Each score averages log-likelihood over action tokens only, excluding the reasoning prefix.
We minimize the \textit{reference-free} pairwise logistic loss (see \cref{app:had-reference-choice} for its regularization bias):
\begin{equation}
    \ell_{\mathrm{pref},t}(\theta)
    =-\log\sigma\!\left(\beta\Delta_t(\theta)\right),
    \label{eq:had-action-preference}
\end{equation}
where $\sigma$ is the sigmoid and $\beta>0$ controls the margin scale.
Unlike response distillation, this term supervises actions under student rather than teacher reasoning.

\paragraph{Harness-aware action validity filtering.}
We set $m_t=V(\tilde{a}_t^\phi\mid\tilde{x}_t^\theta)$, where the validity checker $V(a\mid\tilde{x})\in\{0,1\}$ evaluates actions using the harness-induced history.
Specifically, we judge the action invalid if it conflicts with the harness-induced environment observation.
The mask applies only to the harness-aware preference learning term, leaving the harness-equipped distillation term unchanged.
See \cref{app:had-implementation} for details on how $V$ is implemented in practical settings.

\paragraph{Final objective.}
Let $\nu$ denote the distribution of collected training records.
Our proposed Harness-Aware Distillation (HAD) combines response distillation with validity-filtered action preferences:
\begin{equation}
    \mathcal L_{\mathrm{HAD}}(\theta;\lambda)
    =\mathbb E_\nu\!\left[
        \ell_{\mathrm{dist},t}(\theta)
        +\lambda m_t\ell_{\mathrm{pref},t}(\theta)
    \right].
    \label{eq:had-objective}
\end{equation}
We adjust $\lambda$ at each update using minibatch gradient-norm balancing of the two loss terms (\cref{app:had-implementation}).
The ablations in \cref{sec:ablation} show gains over harness-equipped distillation alone ($\lambda=0$), unfiltered preferences ($m_t\equiv1$), full-response preferences, positive-only supervision, and a teacher reference model.
Full-response preferences compare $\smash{\tilde{y}_t^\phi}$ with $\smash{y_t^\phi}$ under the same augmented history, rather than comparing actions under student reasoning.
Positive-only supervision imitates $\smash{\tilde{a}_t^\phi}$ under the student reasoning prefix without contrasting it with $\smash{a_t^\phi}$.
See \cref{app:had-ablation-variants} for more detailed definitions; corresponding ablation results are provided in \cref{sec:ablation,app:ablation-extra}.

\begin{figure}[t]
    \centering
    \includegraphics[width=0.8\linewidth]{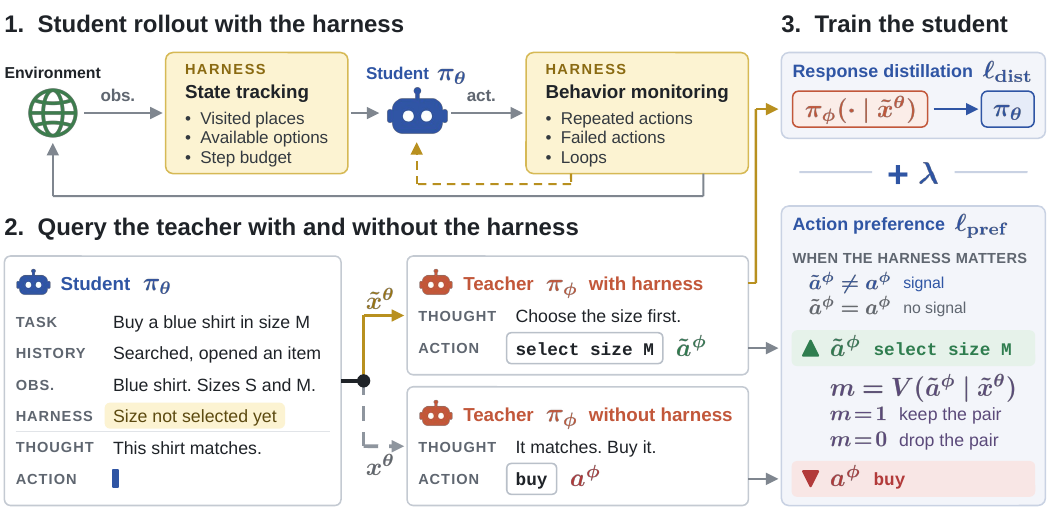}
\caption{Overview of HAD. (1) The student acts through the harness, which tracks the environment state and monitors the student's behavior. (2) At each visited state, the same teacher is queried with and without the harness information. (3) The student imitates the teacher's response given the harness and learns to prefer the action chosen with the harness over the action chosen without it, which gives a signal only when the two actions differ. The validity mask drops pairs whose preferred action contradicts the harness records.}
    \label{fig:schematic}
\end{figure}

\subsection{What Harness Awareness Adds to Distillation}\label{sec:had-theory}

Our proposed harness-aware preference learning makes \textit{harness awareness} explicit by contrasting the same teacher's actions with and without explicit harness information.
For a fixed harness-induced history $\smash{\tilde{x}^\theta}$ and its stripped history $\smash{x^\theta}$, let $\smash{P_+(a) = \sum_{z} \pi_{\phi}(z,a \mid \tilde{x}^\theta)}$ and $\smash{P_-(a) = \sum_{z} \pi_{\phi}(z,a \mid x^\theta)}$ be the teacher's action marginals.
Here, $+$ denotes the teacher with explicit harness information, while $-$ denotes the same teacher with that information stripped from the history.
Assuming that harness information generally helps the teacher choose better actions, we form each preference pair by labeling the harness-equipped action from $P_+$ positive and the stripped-history action from $P_-$ negative.
The analysis below characterizes the auxiliary preference term; formal assumptions and derivations appear in \cref{app:had-preference-adds} and \cref{app:had-filtering-adds}.


\paragraph{\textit{What supervision is added?}}
Response distillation already supervises harness-equipped behavior, but it does not distinguish actions that are generally likely under the teacher from those that become more likely specifically because of the harness.
Our additional supervision reveals how the teacher's action probabilities differ without the harness information.
For independent draws from strictly positive laws on a common finite action set, the isolated unfiltered preference loss $\ell_{\mathrm{pref,t}}$ has optimal unrestricted scores $\log(P_+(a)/P_-(a))+\kappa$, with arbitrary constant $\kappa$ as in \cref{prop:had-preference}.
Thus, our harness-aware preference term does not simply reinforce actions with high $P_+(a)$; it emphasizes actions with high $P_+(a)$ relative to $P_-(a)$.
Response distillation alone cannot identify this relative change, since its risk depends only on the positive response law  as mentioned in \cref{obs:had-additional-contrast}.

\paragraph{\textit{Why filter positive preference labels?}}
Harness access does not guarantee that the positive action $a \sim P_+$ is valid.
When it is impossible to know which action is correct, the best we can do is exclude actions that are clearly invalid because they conflict with the harness-induced environment observation.
We thus use the validity checker $V$, with $V(a\mid\tilde{x}^\theta)=1$ denoting acceptance.
If the positive action is rejected, we drop only its preference loss, leaving response distillation unchanged.
With independent teacher sampling and nonzero acceptance, the masked risk is the acceptance probability times the preference risk with $P_+$ conditioned on acceptance and $P_-$ unchanged (cf. \cref{obs:had-filtering}).
However, filtering cannot remove cases where the positive action is valid yet incorrect; this limitation is inherent to any distillation framework that follows the incomplete teacher.

\subsection{Why Action-Only Preference under Student Reasoning}
\label{sec:had-action-design}

Our preference term is designed with two considerations: what to compare and under which reasoning context to compare it.
We motivate these choices in the following paragraphs, with further formal analysis in \cref{app:had-joint-preferences,app:had-prefix-supervision}.

\paragraph{\textit{Why action-only?}}
Our goal is to supervise the action difference induced by the harness, rather than differences between complete teacher responses.
When the teacher is queried with and without the harness, its reasoning can differ substantially in content or style even when the resulting action is the same.
Comparing complete responses can therefore spend much of the preference signal on distinguishing such reasoning differences, rather than focusing on the action change induced by the harness.
This effect is further amplified because reasoning traces are typically much longer than actions, allowing them to dominate the full-response comparison and weaken the action-level signal.
Indeed, our empirical ablation in \cref{tab:app-design} shows that the full-response variant performs worse.
\cref{app:had-joint-preferences} further formalizes this distinction: full-response preferences combine the desired action contrast with an additional teacher-reasoning contrast, which can alter the induced action ordering even when the teacher action marginals are unchanged.

\paragraph{\textit{Why student-generated reasoning?}}
Having isolated the action-level comparison, we score the teacher actions with and without the harness under the student’s own sampled reasoning.
This matches the deployed student’s decision context: at inference time, it chooses an action after its own reasoning, not after the reasoning underlying either teacher action.
Accordingly, the preference term does not condition action supervision on the teacher’s no-harness reasoning; the harness-equipped teacher’s reasoning is instead learned separately through response distillation.
Thus, the auxiliary preference term focuses on action choices under the student’s own reasoning, while response distillation provides supervision for the desired teacher reasoning.
\cref{app:had-prefix-supervision} further formalizes that student-prefix conditioning preserves \(P_+\) and \(P_-\) at student-visited reasoning contexts, while response distillation complements the auxiliary objective with reasoning-dependent supervision.

\section{Experiment}\label{sec:experiment}
\subsection{Experimental setup}
\label{sec:exp-setup}

\paragraph{Models and datasets.}
We evaluate on ALFWorld~\citep{shridhar2021alfworld}, WebShop~\citep{yao2022webshop}, and ScienceWorld~\citep{wang2022scienceworld}.
Using models from the Qwen3 family~\citep{yang2025qwen3}, we distill 1.7B from the 8B teacher on ALFWorld, and 0.6B from the 30B-A3B teacher on WebShop and ScienceWorld.
To assess generalization across model families, we additionally distill Gemma-4-E2B from Gemma-4-12B on ALFWorld~\citep{gemmateam2026gemma4}.
All evaluations use temperature 0.4 with thinking disabled; \cref{app:env} gives further details.


\begin{wraptable}{r}{0.38\textwidth}
\vspace{-1em}
\centering
\small
\caption{Harness components.\vspace{-0.5em}}
\label{tab:harness}
\setlength{\aboverulesep}{0pt}
\setlength{\belowrulesep}{0pt}
\setlength{\extrarowheight}{0.15em}
\setlength{\tabcolsep}{0.5em}
\renewcommand{\arraystretch}{1.1}
\begin{tabular}{ll}
\toprule
\textbf{Component} & \textbf{Computed from} \\
\midrule
\rowcolor{hstate}\multicolumn{2}{c}{\textsc{State Tracking}} \\
Available actions & \textcolor{hgray}{Environment action list} \\
Situation & \textcolor{hgray}{Location and inventory} \\
History & \textcolor{hgray}{Visit history} \\
Budget & \textcolor{hgray}{Step counter} \\
\midrule
\rowcolor{hstate}\multicolumn{2}{c}{\textsc{Behavior Monitoring}} \\
Failure & \textcolor{hgray}{Actions at the same state} \\
Stall & \textcolor{hgray}{Recent states} \\
\bottomrule
\end{tabular}
\vspace{-2em}
\end{wraptable}

\paragraph{Harness.}
We set the harness to be task-agnostic, sharing the same code and templates across environments without task-specific knowledge such as subgoals, and it only adds text to the observation without selecting or blocking actions.
Its components follow functions common in agent harnesses~\citep{wang2025openhands} and fall into state tracking and behavior monitoring, as listed in \cref{tab:harness}.
State tracking is shown at every step, and behavior monitoring only when its condition fires.
Every method uses the same harness for training and evaluation, and \cref{app:harness} gives the templates, thresholds, validity check, and example pages.

\paragraph{Metrics.}
We report the success rate (SR) on all environments, the percentage of episodes that complete the task within the step limit, and the score on WebShop and ScienceWorld, given by the mean final reward scaled to 100 and the mean final task score, respectively.
We also report the average number of turns in successful episodes, and harness utilization (HU), which measures how often the agent's next action agrees with what the harness reports.
For each harness component, we compute the fraction of steps where the next action is consistent with its report and average these fractions over the components.
For example, the next action should be among the available actions and should differ from an action reported as failed.
We use this metric only as a diagnostic of how the agent responds to the harness, and \cref{app:hu} gives the full rules and counting procedure.

\paragraph{Baselines.}
We compare HAD with four OPD methods, all trained and evaluated with the same harness.
OPD~\citep{agarwal2024gkd} matches the teacher with the token-level reverse KL divergence on the student's own responses, SAGE-OPD~\citep{sageopd2026} reweights each turn by how strongly the teacher judges that it needs correction, Guided-OPD~\citep{li2026guidedopd} lets the teacher take over a decreasing fraction of turns, and SOPD~\citep{sopd2026} imitates a full step written by the teacher at each state the student visits.
All methods share the same training loop, task pool, and number of updates, and we also report the zero-shot teacher and student.
\Cref{app:implementation_details} gives implementation details and differences from the original papers.

\paragraph{Hyperparameter setting.}
All methods use full fine-tuning with AdamW, a cosine learning rate schedule with 15 warmup updates, gradient clipping at 1.0, weight decay 0.01, and an effective batch size of 60.
The learning rate is $10^{-5}$ for all methods.
Each round rolls out 16 tasks with the current student, and each collected step is used once within two policy versions.
We train for 250 updates on ALFWorld and 150 updates on WebShop and ScienceWorld.
For HAD, we use $\beta=0.5$ and set $\lambda$ at each update so that the gradient norm of the preference term is $\rho=0.5$ times that of response distillation.
\Cref{app:had-implementation} and \Cref{app:implementation_details} list all the details.

\subsection{Main experiment}

\newcommand{\hcell}[2]{#1\,{\scriptsize\textcolor{gray}{/\,#2}}}
\newcommand{\hdcell}[2]{\textit{#1}\,{\scriptsize\textcolor{gray}{/\,\textit{#2}}}}
\newcommand{\hbcell}[2]{\textbf{#1}\,{\scriptsize\textcolor{gray}{/\,#2}}}

\begin{table*}[t]
\centering
\small
\caption{Main results. Each cell reports performance \textbf{with harness} \textcolor{gray!90}{\scriptsize/\,without harness}. All distillation methods are trained with the harness, and the teacher and the student are evaluated zero-shot. Bold marks the best student in each column.\vspace{-0.5em}}
\label{tab:main}
\setlength{\aboverulesep}{0pt}
\setlength{\belowrulesep}{0pt}
\setlength{\extrarowheight}{0.15em}
\setlength{\tabcolsep}{0.5em}
\renewcommand{\arraystretch}{1.1}
\resizebox{\textwidth}{!}{\begin{tabular}{l*{6}{>{\centering\arraybackslash}p{2.0cm}}}
    \toprule
    & \multicolumn{2}{c}{ALFWorld {\footnotesize (1.7B $\leftarrow$ 8B)}}
    & \multicolumn{2}{c}{WebShop {\footnotesize (0.6B $\leftarrow$ 30B)}}
    & \multicolumn{2}{c}{ScienceWorld {\footnotesize (0.6B $\leftarrow$ 30B)}} \\
    \cmidrule(lr){2-3} \cmidrule(lr){4-5} \cmidrule(lr){6-7}
    Method & Unseen SR & Seen SR & Score & SR & Score & SR \\
    \midrule
    \textit{Teacher}
        & \hcell{51.49}{38.06} & \hcell{50.00}{37.86}
        & \hcell{49.96}{41.41} & \hcell{18.80}{13.40}
        & \hcell{\phantom{$-$}20.26}{\phantom{$-$}18.47} & \hcell{20.13}{15.44} \\
    \textit{Student}
        & \hcell{\phantom{0}9.70}{\phantom{0}2.99} & \hcell{20.71}{\phantom{0}6.43}
        & \hcell{20.95}{\phantom{0}4.29} & \hcell{\phantom{0}9.80}{\phantom{0}1.40}
        & \hcell{$-$16.47}{$-$16.97} & \hcell{\phantom{0}3.36}{\phantom{0}3.36} \\
    \midrule
    OPD
        & \hcell{47.01}{31.34} & \hcell{40.00}{31.43}
        & \hcell{41.59}{39.05} & \hcell{19.00}{16.60}
        & \hcell{\phantom{0}$-$0.82}{\phantom{0}$-$0.64} & \hcell{\phantom{0}8.05}{\phantom{0}7.38} \\
    SAGE-OPD
        & \hcell{31.34}{25.37} & \hcell{41.43}{34.29}
        & \hcell{42.21}{39.15} & \hcell{17.00}{14.00}
        & \hcell{\phantom{0$-$}0.77}{\phantom{0$-$}1.41} & \hcell{\phantom{0}6.04}{\phantom{0}6.71} \\
    Guided-OPD
        & \hcell{46.27}{33.58} & \hcell{35.71}{27.14}
        & \hcell{43.89}{40.09} & \hcell{16.60}{14.40}
        & \hcell{\phantom{0}$-$2.52}{\phantom{0}$-$4.17} & \hcell{\phantom{0}6.04}{\phantom{0}6.04} \\
    SOPD
        & \hcell{45.52}{26.87} & \hcell{41.43}{27.86}
        & \hcell{39.97}{37.64} & \hcell{13.80}{13.60}
        & \hcell{\phantom{0}$-$8.45}{\phantom{0}$-$8.00} & \hcell{\phantom{0}6.71}{\phantom{0}5.37} \\
    \sk \textbf{HAD (ours)}
        & \sk\hbcell{63.43}{40.30} & \sk\hbcell{51.43}{37.86}
        & \sk\hbcell{47.19}{41.63} & \sk\hbcell{19.60}{15.00}
        & \sk\hbcell{\phantom{0$-$}4.17}{\phantom{0$-$}4.37} & \sk\hbcell{\phantom{0}9.40}{\phantom{0}8.72} \\
    \bottomrule
\end{tabular}}
\end{table*}

\begin{figure}[t]
    \centering
    \includegraphics[width=\textwidth]{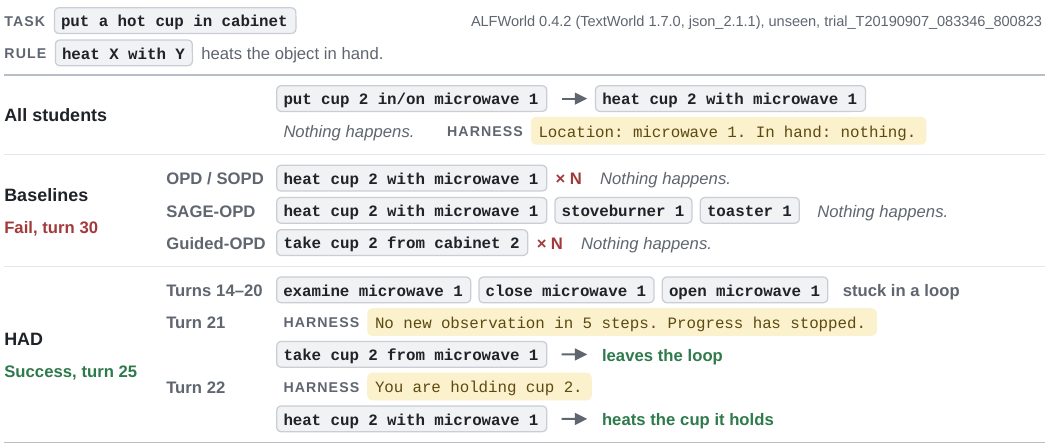}
    \vspace{-1.5em}
    \caption{An unseen ALFWorld task where heating works only on the object in hand. The baselines loop on actions until the step limit, while HAD leaves its loop after the stall report and heats the cup once the harness reports that it holds it.}
    \label{fig:qualitative}
\end{figure}

\paragraph{Quantitative analysis.}
HAD achieves the best performance in every environment in \cref{tab:main}.
With the harness, it reaches 63.4\% on unseen and 51.4\% on seen ALFWorld tasks, against 47.0\% and 41.4\% for the best baseline, and it even exceeds its 8B teacher.
The harness also matters more for HAD than for the baselines, consistent with our design that trains the student on the decisions the harness changes.
Removing the harness at evaluation lowers HAD from 63.4\% to 40.3\% on unseen tasks, the largest drop among all methods, yet HAD still outperforms the majority of baselines.
\Cref{app:hu,app:turns} further show that its actions are the most consistent with the harness and that it takes a similar number of turns in successful episodes as the other methods.

\paragraph{Qualitative analysis.}
\Cref{fig:qualitative} shows an unseen ALFWorld task that asks the agent to heat a cup and put it in a cabinet, where heating works only on the object the agent holds.
Every student puts the cup in the microwave and then tries to heat it with an empty hand, and the baselines keep repeating actions with no effect until the step limit.
HAD falls into a similar loop but leaves it with the help of the harness.
After the harness flags that five steps produced nothing new, HAD takes the cup back, and once the harness shows that it holds the cup, HAD heats it and completes the task.
\Cref{app:prompts} gives additional examples.

\subsection{Analysis}
\label{sec:analysis}

\begin{table}[t]
\centering
\small
\setlength{\aboverulesep}{0pt}
\setlength{\belowrulesep}{0pt}
\setlength{\extrarowheight}{0.15em}
\begin{minipage}[t]{0.49\linewidth}\vspace{0pt}
\centering
\caption{Prevention (episodes without stalls) and recovery (stalls escaped) rates (\%) on ALFWorld.\vspace{-0.5em}}
\label{tab:recovery}
\renewcommand{\arraystretch}{1.0}
\begin{tabularx}{\linewidth}{l*{2}{>{\centering\arraybackslash}X}}
\toprule
 & Prevention & Recovery \\
\midrule
Teacher & 64.6 & 47.0 \\
Student & 20.4 & 46.8 \\
\midrule
OPD & 64.2 & 42.9 \\
SAGE-OPD & 60.2 & 32.4 \\
Guided-OPD & 60.6 & 18.8 \\
SOPD & 62.0 & 50.0 \\
\rowcolor{hadrow} \textbf{HAD (ours)} & \bf{75.9} & \bf{59.7} \\
\bottomrule
\end{tabularx}
\end{minipage}\hfill
\begin{minipage}[t]{0.49\linewidth}\vspace{0pt}
\centering
\caption{Success rate of HAD on ALFWorld with one harness category hidden or harness-rejected actions blocked.\vspace{-0.5em}}
\label{tab:removal}
\renewcommand{\arraystretch}{1.14}
\begin{tabularx}{\linewidth}{l*{2}{>{\centering\arraybackslash}X}}
\toprule
Harness at evaluation & Unseen & Seen \\
\midrule
Full harness & 63.43 & 51.43 \\
\midrule
\textit{Hidden} & & \\
\quad Behavior monitoring & 55.97 & 45.00 \\
\quad State tracking & 37.31 & 40.71 \\
\midrule
\textit{Blocked} & & \\
\quad Rejected actions & 55.22 & 46.43 \\
\bottomrule
\end{tabularx}
\end{minipage}
\end{table}
\paragraph{Error trajectory recovery.}
Long-horizon agents often fall into loops of actions that have no effect, and escaping such error states is a known difficulty~\citep{wang2025openhands,sageopd2026}.
We ask whether HAD avoids these states, which we call prevention, and whether it leaves a stall once stuck, which we call recovery, counting a stall as escaped when the agent reaches a new state.
\emph{HAD improves both prevention and recovery over every other model}, as shown in \cref{tab:recovery}.
It avoids stalls and repeated failed actions in 75.9\% of its episodes, against at most 64.6\% for the other models.
Standard distillation does not teach recovery, as no baseline clearly improves over the untrained student at 46.8\%, whereas HAD escapes 59.7\% of its stalls.
Every stall we count is flagged by the harness, which suggests that HAD learns to act on the harness to leave a stall, and the states where the two teacher actions differ are spread over the whole episode, as shown in \cref{app:disagreement}.

\paragraph{Harness internalization.}
A harness is internalized when the model learns to produce, from its own weights, the behavior that the harness would otherwise supply, so that it acts correctly even when that part of the harness is removed~\citep{zhao2026ophsd}.
HAD is not designed to internalize the harness, but our results suggest that it may internalize part of the harness while still relying on the rest.
Without the harness, HAD reaches 40.3\% on unseen and 37.9\% on seen tasks in \cref{tab:main}, above every baseline without the harness and on par with or above the teacher without the harness.
\Cref{tab:removal} shows that this reliance differs across harness categories.
Hiding state tracking lowers the success rate of HAD from 63.4\% to 37.3\% on unseen tasks and from 51.4\% to 40.7\% on seen tasks, whereas hiding behavior monitoring lowers it to 56.0\% and 45.0\%, although behavior monitoring also appears on fewer steps.
Blocking the actions that the harness rejects also lowers the success rate on both splits, which suggests that the harness is more useful to HAD as information to act on than as a hard constraint.
These results are consistent with HAD internalizing part of the feedback from behavior monitoring, while it still reads state information, such as the places visited and the objects held, from the harness.
State tracking thus appears more essential at deployment.

\subsection{Cross-model generalization}
\label{sec:generalization}

\begin{wraptable}{r}{0.44\textwidth}
    \vspace{-1.5em}
    \centering
    \small
    \caption{Gemma-4 results.\vspace{-0.5em}}
    \label{tab:gemma}
    \setlength{\aboverulesep}{0pt}
    \setlength{\belowrulesep}{0pt}
    \setlength{\extrarowheight}{0.15em}
    \renewcommand{\arraystretch}{1.1}
    \begin{tabularx}{\linewidth}{l*{2}{>{\centering\arraybackslash}X}}
    \toprule
    Method & Unseen SR & Seen SR \\
    \midrule
    \textit{Teacher} & \hcell{51.49}{43.28} & \hcell{49.29}{42.14} \\
    \textit{Student} & \hcell{14.93}{8.21} & \hcell{18.57}{15.71} \\
    \midrule
    OPD & \hcell{26.87}{18.66} & \hcell{31.43}{29.29} \\
    SAGE-OPD & \hcell{24.63}{20.90} & \hcell{28.57}{27.14} \\
    Guided-OPD & \hcell{29.10}{18.66} & \hcell{32.86}{24.29} \\
    SOPD & \hcell{32.09}{29.85} & \hcell{34.29}{25.71} \\
    \sk \textbf{HAD (ours)} & \sk \hbcell{37.31}{23.88} & \sk \hbcell{36.43}{30.00} \\
    \bottomrule
    \end{tabularx}
    \vspace{-0.8em}
\end{wraptable}

\Cref{tab:gemma} repeats the ALFWorld evaluation with the Gemma-4 family; E2B students are distilled from the 12B teacher, under the same recipe and harness.
HAD again achieves the highest success rate among the students, with 37.3\% on unseen and 36.4\% on seen tasks against 32.1\% and 34.3\% for the strongest baseline.
Its harness utilization also exceeds even that of the teacher, as detailed in \cref{app:gemma}.
This agreement across model families suggests that the gain of HAD comes from the harness contrast rather than from properties of a specific model family.

\subsection{Ablation study}
\label{sec:ablation}

\newcommand{\cmark}{\ding{51}}

\setlength{\intextsep}{5pt}
\setlength{\columnsep}{10pt}
\setlength{\abovecaptionskip}{2pt}
\setlength{\belowcaptionskip}{2pt}

\begin{wraptable}{r}{0.45\linewidth}
    \vspace{-0.8em}
    \centering
    \small
    \caption{Component ablations.}
    \label{tab:had-ablation}
    \setlength{\aboverulesep}{0pt}
    \setlength{\belowrulesep}{0pt}
    \setlength{\extrarowheight}{0.15em}
    \renewcommand{\arraystretch}{1.0}
    \begin{tabularx}{\linewidth}{X*{2}{>{\centering\arraybackslash}p{0.9cm}}}
    \toprule
    Variant & SR & HU \\
    \midrule
    \rowcolor{hadrow} \textbf{HAD (ours)} & \bf{57.4} & \bf{81.0} \\
    \midrule
    w/o validity ($m_t\equiv1$) & 50.8 & 78.9 \\
    w/o negative (pos-only) & 52.4 & 74.4 \\
    w/o preference ($\lambda=0$) & 43.5 & 73.1 \\
    \bottomrule
    \end{tabularx}
\end{wraptable}


\paragraph{Component ablation.}
\Cref{tab:had-ablation} removes each component of HAD in turn on ALFWorld.
Removing the validity check ($m_{t}\equiv1$) lowers the SR from 57.4\% to 50.8\%, since teacher actions that contradict the harness records then enter the preference loss.
Removing the negative, so that the student only imitates the action chosen with the harness after its own reasoning, lowers the SR to 52.4\% and harness utilization from 81.0\% to 74.4\%, close to the 73.1\% of response distillation alone.
The contrast with the action chosen without the harness therefore accounts for most of the gain in harness utilization (HU), while the validity check mainly protects task performance.
Removing the preference term leaves response distillation alone ($\lambda=0$), and the SR drops to 43.5\%.
Each component thus contributes to HAD in a complementary manner.

\paragraph{Teacher budget.}
\begin{wraptable}{r}{0.45\linewidth}
    \vspace{-0.8em}
    \centering
    \small
    \caption{Teacher budget ablations.}
    \label{tab:budget}
    \setlength{\aboverulesep}{0pt}
    \setlength{\belowrulesep}{0pt}
    \setlength{\extrarowheight}{0.15em}
    \renewcommand{\arraystretch}{1.0}
    \begin{tabularx}{\linewidth}{l*{2}{>{\centering\arraybackslash}X}}
    \toprule
    Method & Unseen SR & Seen SR \\
    \midrule
    OPD & 47.0 & 40.0 \\
    OPD, 2$\times$ & 47.8 & 45.7 \\
    \midrule
    SOPD & 45.5 & 41.4 \\
    SOPD, 2$\times$ & 50.8 & 42.9 \\
    \midrule
    \rowcolor{hadrow} \textbf{HAD (ours)} & \bf{63.4} & \bf{51.4} \\
    \bottomrule
    \end{tabularx}
\end{wraptable}
In the main results, every method is trained for the same number of updates, but HAD queries the teacher twice per state, once with and once without the harness, while the baselines query it once.
To separate the effect of this extra teacher budget, \cref{tab:budget} also trains OPD and SOPD with twice the teacher budget.
Each baseline sees twice as many student trajectories, so that the teacher is queried as often as in HAD, which also gives the baselines twice as many updates and favors them in this comparison.
Even with this advantage, the success rate averaged over seen and unseen tasks only rises from 43.5\% to 46.7\% for OPD and from 43.5\% to 46.8\% for SOPD, far below the 57.4\% of HAD.
The gain of HAD is therefore not explained by the additional teacher queries.

\paragraph{Design choices.}
HAD scores only the action after the student's own reasoning and uses no reference model, following \cref{sec:had-action-design}.
As shown in \cref{tab:app-design}, both alternatives to these choices are less effective than HAD (see \cref{app:ablation-extra}).
Comparing full responses instead of actions lets the loss drop by telling apart the teacher's reasoning styles with and without the harness, and the long reasoning dilutes the signal from the short action.
Scoring only the action after the student's own reasoning avoids both problems and matches how the student chooses its action at deployment.
Using the teacher as a reference model does not help either, since the teacher's probabilities after the student's reasoning are not a reliable prior at the states the student visits.
In \cref{tab:app-rho}, the success rate peaks at $\rho=0.5$, which suggests that a smaller weight gives too little signal from the harness contrast while a larger weight may pull the small student away from the task.
\section{Conclusion}
\label{sec:conclusion}

We have presented Harness-Aware Distillation (HAD), a distillation framework for small language model agents that are deployed with a harness.
HAD starts from the view that distilling such an agent is a problem between two systems that share the same harness, so the student mainly needs the teacher's ability to act on what the harness reports.
It complements on-policy distillation with an action preference that contrasts the same teacher's actions with and without the harness information, and with a validity check that drops pairs whose preferred action contradicts the harness records.
Across multiple long-horizon agent tasks, our method achieves the best performance among on-policy distillation methods equipped with the same harness.

\paragraph{Future work and limitations.}
We evaluated HAD with students of up to 2B parameters and teachers of up to 30B parameters in text-based environments; extending it to larger models and tool-calling or software agents remains future work.
Our harness is also designed by hand, and harnesses that are learned or optimized automatically may interact with HAD differently.
More broadly, HAD reflects a principle that distillation into a system with a fixed component should focus on what that component cannot provide, which may also apply to agents that rely on retrieval or external tools.

\newpage

\bibliographystyle{plainnat}
\bibliography{iclr2027_conference}

@inproceedings{yao2023react,
  title={{ReAct}: Synergizing Reasoning and Acting in Language Models},
  author={Shunyu Yao and Jeffrey Zhao and Dian Yu and Nan Du and Izhak Shafran and Karthik R Narasimhan and Yuan Cao},
  booktitle={International Conference on Learning Representations (ICLR)},
  year={2023}
}

@inproceedings{yang2024sweagent,
  title = {{SWE-agent}: Agent-Computer Interfaces Enable Automated Software Engineering},
  author = {Yang, John and Jimenez, Carlos and Wettig, Alexander and Lieret, Kilian and Yao, Shunyu and Narasimhan, Karthik and Press, Ofir},
  booktitle = {Conference on Neural Information Processing Systems (NeurIPS)},
  year = {2024}
}

@inproceedings{lee2026metaharness,
  title={{Meta-Harness}: End-to-End Optimization of Model Harnesses},
  author={Yoonho Lee and Roshen Sanjay Nair and Qizheng Zhang and Kangwook Lee and Omar Khattab and Chelsea Finn},
  booktitle={Conference on Language Modeling (COLM)},
  year={2026}
}

@inproceedings{wang2025openhands,
  title={{OpenHands}: An Open Platform for {AI} Software Developers as Generalist Agents},
  author={Xingyao Wang and Boxuan Li and Yufan Song and Frank F. Xu and Xiangru Tang and Mingchen Zhuge and Jiayi Pan and Yueqi Song and Bowen Li and Jaskirat Singh and Hoang H. Tran and Fuqiang Li and Ren Ma and Mingzhang Zheng and Bill Qian and Daniel Shao and Niklas Muennighoff and Yizhe Zhang and Binyuan Hui and Junyang Lin and Robert Brennan and Hao Peng and Heng Ji and Graham Neubig},
  booktitle={International Conference on Learning Representations (ICLR)},
  year={2025}
}

@inproceedings{agarwal2024gkd,
  title={On-Policy Distillation of Language Models: Learning from Self-Generated Mistakes},
  author={Rishabh Agarwal and Nino Vieillard and Yongchao Zhou and Piotr Stanczyk and Sabela Ramos Garea and Matthieu Geist and Olivier Bachem},
  booktitle={International Conference on Learning Representations (ICLR)},
  year={2024}
}

@inproceedings{tcod2026,
  title={Exploring Temporal Curriculum in On-Policy Distillation for Multi-turn Autonomous Agents},
  author={Jiaqi Wang and Wenhao Zhang and Weijie Shi and Yaliang Li and James Cheng},
  booktitle={Conference on Language Modeling (COLM)},
  year={2026}
}

@inproceedings{rafailov2023dpo,
  title={Direct preference optimization: Your language model is secretly a reward model},
  author={Rafailov, Rafael and Sharma, Archit and Mitchell, Eric and Manning, Christopher D and Ermon, Stefano and Finn, Chelsea},
  booktitle={Conference on Neural Information Processing Systems (NeurIPS)},
  year={2023}
}

@inproceedings{shridhar2021alfworld,
  title={{ALFWorld}: Aligning Text and Embodied Environments for Interactive Learning},
  author={Mohit Shridhar and Xingdi Yuan and Marc-Alexandre Cote and Yonatan Bisk and Adam Trischler and Matthew Hausknecht},
  booktitle={International Conference on Learning Representations (ICLR)},
  year={2021}
}

@inproceedings{yao2022webshop,
  title={{WebShop}: Towards Scalable Real-World Web Interaction with Grounded Language Agents},
  author={Yao, Shunyu and Chen, Howard and Yang, John and Narasimhan, Karthik},
  booktitle={Conference on Neural Information Processing Systems (NeurIPS)},
  year={2022}
}

@inproceedings{shinn2023reflexion,
  title={Reflexion: language agents with verbal reinforcement learning},
  author={Shinn, Noah and Cassano, Federico and Gopinath, Ashwin and Narasimhan, Karthik and Yao, Shunyu},
  booktitle={Conference on Neural Information Processing Systems (NeurIPS)},
  year={2023}
}

@inproceedings{kang2025distilling,
  title={Distilling {LLM} Agent into Small Models with Retrieval and Code Tools},
  author={Kang, Minki and Jeong, Jongwon and Lee, Seanie and Cho, Jaewoong and Hwang, Sung Ju},
  booktitle={Conference on Neural Information Processing Systems (NeurIPS)},
  year={2025}
}

@inproceedings{yang2024rlcd,
  title={{RLCD}: Reinforcement Learning from Contrastive Distillation for {LM} Alignment},
  author={Kevin Yang and Dan Klein and Asli Celikyilmaz and Nanyun Peng and Yuandong Tian},
  booktitle={International Conference on Learning Representations (ICLR)},
  year={2024}
}

@inproceedings{gu2024minillm,
  title={Mini{LLM}: Knowledge Distillation of Large Language Models},
  author={Yuxian Gu and Li Dong and Furu Wei and Minlie Huang},
  booktitle={International Conference on Learning Representations (ICLR)},
  year={2024}
}

@inproceedings{xu2024contrastive,
  title={Contrastive Preference Optimization: Pushing the Boundaries of {LLM} Performance in Machine Translation},
  author={Xu, Haoran and Sharaf, Amr and Chen, Yunmo and Tan, Weiting and Shen, Lingfeng and Van Durme, Benjamin and Murray, Kenton and Kim, Young Jin},
  booktitle={International Conference on Machine Learning (ICML)},
  year={2024},
}

@inproceedings{zeng2024agenttuning,
  title={{AgentTuning}: Enabling Generalized Agent Abilities for {LLM}s},
  author={Aohan Zeng and Mingdao Liu and Rui Lu and Bowen Wang and Xiao Liu and Yuxiao Dong and Jie Tang},
  booktitle={Findings of the Association for Computational Linguistics: ACL},
  year={2024}
}

@inproceedings{wang2022scienceworld,
  title={{ScienceWorld}: Is your Agent Smarter than a 5th Grader?},
  author={Ruoyao Wang and Peter Jansen and Marc-Alexandre C{\^o}t{\'e} and Prithviraj Ammanabrolu},
  booktitle={Conference on Empirical Methods in Natural Language Processing (EMNLP)},
  year={2022}
}

@inproceedings{liu2026structured,
  title={Structured Agent Distillation for Large Language Model Agents},
  author={Jun Liu and Zhenglun Kong and Peiyan Dong and Changdi Yang and Tianqin Li and Yanyue Xie and Yifan Gong and Xuan Shen and Pu Zhao and Hao Tang and Geng Yuan and Wei Niu and Wenbin Zhang and Xue Lin and Dong Huang and Yanzhi Wang},
  booktitle={International Conference on Autonomous Agents and Multiagent Systems (AAMAS)},
  year={2026}
}

@inproceedings{yan2025rpo,
  title={{RPO}: Retrieval Preference Optimization for Robust Retrieval-Augmented Generation},
  author={Shi-Qi Yan and Quan Liu and Zhen-Hua Ling},
  booktitle={Annual Meeting of the Association for Computational Linguistics (ACL)},
  year={2025}
}

@inproceedings{kim2016sequence,
  title={Sequence-Level Knowledge Distillation},
  author={Yoon Kim and Alexander M. Rush},
  booktitle={Conference on Empirical Methods in Natural Language Processing (EMNLP)},
  year={2016}
}

@misc{belcak2025small,
  title={Small Language Models are the Future of Agentic AI},
  author={Belcak, Peter and Heinrich, Greg and Diao, Shizhe and Fu, Yonggan and Dong, Xin and Muralidharan, Saurav and Lin, Yingyan Celine and Molchanov, Pavlo},
  url={https://arxiv.org/abs/2506.02153v3},
  year={2025}
}

@misc{sopd2026,
  title={Step-Level On-Policy Distillation: Interpolating Between On-Policy Distillation and Supervised Fine-Tuning},
  author={Changhui Sun and Lanbo Liu and Hang Lei and Tong Ling and Jiahang Xie and Zhiyong Zheng and Yujia Wang and Hao Liu and Feng Xiao and Lu Liu and Yanlong Du and Zifeng Cheng and Ziwei Jiang and Qing Gu},
  url={https://arxiv.org/abs/2608.16333v1},
  year={2026}
}

@misc{chen2023fireact,
  title={{FireAct}: Toward language agent fine-tuning},
  author={Chen, Baian and Shu, Chang and Shareghi, Ehsan and Collier, Nigel and Narasimhan, Karthik and Yao, Shunyu},
  url={https://arxiv.org/abs/2310.05915v1},
  year={2023}
}

@misc{sageopd2026,
  title={{SAGE-OPD}: Selective Agent-Guided Intervention for Multi-Turn On-Policy Distillation},
  author={Zhou, Yuhang and Zhang, Lizhu and Wu, Yifan and Wang, Mingyi and Peng, Bo and Liu, Jiayi and Fan, Xiangjun and Zhao, Zhuokai},
  url={https://arxiv.org/abs/2606.19659v1},
  year={2026}
}

@misc{li2026guidedopd,
  title={On-Policy Distillation with Curriculum Turn-level Guidance for Multi-turn Agents},
  author={Gengsheng Li and Mao Zheng and Mingyang Song and Ruiqi Liu and Tianyu Yang and Jie Sun and Qiyong Zhong and Haiyun Guo and Junfeng Fang and Dan Zhang and Jinqiao Wang},
  url={https://arxiv.org/abs/2606.15912v2},
  year={2026}
}

@misc{ye2026onpolicycontext,
  title={On-policy context distillation for language models},
  author={Ye, Tianzhu and Dong, Li and Wu, Xun and Huang, Shaohan and Wei, Furu},
  url={https://arxiv.org/abs/2602.12275v2},
  year={2026}
}

@misc{zhao2026ophsd,
  title={Training with Harnesses: On-Policy Harness Self-Distillation for Complex Reasoning},
  author={Zhao, Zhengyang and Ma, Lu and Zhang, Wentao},
  url={https://arxiv.org/abs/2605.08741v1},
  year={2026}
}

@misc{yu2026pbsd,
  title={Preference-Based Self-Distillation: Beyond {KL} Matching via Reward Regularization},
  author={Yu, Xin and Liao, Liuchen and Zhang, Yiwen and Yu, Yingchen and Xue, Lingzhou and Guo, Qinzhen},
  url={https://arxiv.org/abs/2605.05040v2},
  year={2026}
}

@misc{tu2026ucob,
  title={{UCOB}: Learning to Utilize and Evolve Agentic Skills via Credit-Aware On-Policy Bidirectional Self-Distillation},
  author={Tu, Songjun and Xu, Chengdong and Zhang, Qichao and Ma, Yiwen and Zhang, Yaocheng and Li, Linjing and Li, Dong and Lan, Xiangyuan and Zhao, Dongbin},
  url={https://arxiv.org/abs/2606.29502v2},
  year={2026}
}

@misc{kim2026interplay,
  title={The Interplay of Harness Design and Post-Training in {LLM} Agents},
  author={Kim, Kyungmin and Choi, Youngbin and Lee, Seoyeon and Jun, Suhyeon and Kim, Dongwoo and Park, Sangdon},
  url={https://arxiv.org/abs/2606.25447v1},
  year={2026}
}

@misc{li2024dpkd,
  title={Direct preference knowledge distillation for large language models},
  author={Li, Yixing and Gu, Yuxian and Dong, Li and Wang, Dequan and Cheng, Yu and Wei, Furu},
  url={https://arxiv.org/abs/2406.19774v2},
  year={2024}
}

@misc{yang2025qwen3,
  title={{Qwen3} technical report},
  author={An Yang and Anfeng Li and Baosong Yang and Beichen Zhang and Binyuan Hui and Bo Zheng and Bowen Yu and Chang Gao and Chengen Huang and Chenxu Lv and Chujie Zheng and Dayiheng Liu and Fan Zhou and Fei Huang and Feng Hu and Hao Ge and Haoran Wei and Huan Lin and Jialong Tang and Jian Yang and Jianhong Tu and Jianwei Zhang and Jianxin Yang and Jiaxi Yang and Jing Zhou and Jingren Zhou and Junyang Lin and Kai Dang and Keqin Bao and Kexin Yang and Le Yu and Lianghao Deng and Mei Li and Mingfeng Xue and Mingze Li and Pei Zhang and Peng Wang and Qin Zhu and Rui Men and Ruize Gao and Shixuan Liu and Shuang Luo and Tianhao Li and Tianyi Tang and Wenbiao Yin and Xingzhang Ren and Xinyu Wang and Xinyu Zhang and Xuancheng Ren and Yang Fan and Yang Su and Yichang Zhang and Yinger Zhang and Yu Wan and Yuqiong Liu and Zekun Wang and Zeyu Cui and Zhenru Zhang and Zhipeng Zhou and Zihan Qiu},
  url={https://arxiv.org/abs/2505.09388v1},
  year={2025}
}

@misc{gemmateam2026gemma4,
  title={{Gemma 4} technical report},
  author={{Gemma Team} and Sherif El Abd and Vaibhav Aggarwal and Robin Algayres and Alek Andreev and Olivier Bachem and Ian Ballantyne and Cormac Brick and Victor Cărbune and Michelle Casbon and Mayank Chaturvedi and Aditya Chawla and Victor Cotruta and Alice Coucke and Phil Culliton and Robert Dadashi and Lucas Dixon and Mohamed Elhawaty and Utku Evci and Clément Farabet and Johan Ferret and Filippo Galgani and Sertan Girgin and Jean-Bastien Grill and Maarten Grootendorst and Jiaxian Guo and Cassidy Hardin and Yanzhang He and Steven M. Hernandez and Omri Homburger and Léonard Hussenot and Juyeong Ji and Armand Joulin and Aishwarya Kamath and Parnian Kassraie and Olivier Lacombe and Preethi Lahoti and Gaël Liu and Gus Martins and Luciano Martins and Tatiana Matejovicova and Ramona Merhej and Nikola Momchev and Sneha Mondal and Ryan Mullins and Sindhu Raghuram Panyam and Shreya Pathak and Sarah Perrin and André Susano Pinto and Etienne Pot and Angéline Pouget and Alexandre Ramé and Sabela Ramos and Douglas Reid and David Rim and Morgane Rivière and Karsten Roth and Louis Rouillard and Omar Sanseviero and Pier Giuseppe Sessa and Shane Settle and Danila Sinopalnikov and Sara Smoot and Piotr Stanczyk and Andreas Steiner and Lawrence Stewart and Ilya Tolstikhin and Michael Tschannen and Anton Tsitsulin and Nino Vieillard and Renjie Wu and Pingmei Xu and Haichuan Yang and Edouard Yvinec and Biao Zhang and Li Zhang and Joe Zou and Nicolas Aagnes and Abdelrahman Abdelhamed and Jakub Adamek and Shivani Agrawal and Shubham Agrawal and Ibrahim Alabdulmohsin and Jean Baptiste Alayrac and Uri Alon and Chandramouli Amarnath and Ankesh Anand and Chrysovalantis Anastasiou and Setareh Ariafar and François-Xavier Aubet and Kyriakos Axiotis and Federico Barbero and Joelle Barral and Alexei Bendebury and Urs Bergmann and Stanley Bileschi and Kat Black and Mathieu Blondel and Sebastian Borgeaud and Arthur Bražinskas and Ryan Burnell and Robert Busa-Fekete and Mu Cai and Daniele Calandriello and Glenn Cameron and Charlotte Caucheteux and Rahma Chaabouni and Garima Chadha and Jetha Chan and Blake Jianhang Chen and Jesse Chen and Lin Chen and Xu Chen and Derek Cheng and Tzu-hsiang Chien and Nikolai Chinaev and Yi Chou and Zhaohui Chu and Benjamin Coleman and Pooja Consul and Sam Conway-Rahman and Scott Crowell and Dylan Cutler and Vivek Dani and Samira Daruki and Anil Das and Daniel Deutsch and Nishanth Dikkala and Li Ding and Qiuhan Ding and Shenil Dodhia and Konstantin Donhauser and Tulsee Doshi and Anca Dragan and Alex Druinsky and Sahil Dua and Zoltan Egyed and Danielle Eisenbud and Daniel Eppens and Cindy Fan and Bahare Fatemi and Yassir Fathullah and Vlad Feinberg and Milen Ferev and Sebastian Flennerhag and Takumi Fujimoto and João Gabriel Oliveira and Isaac Galatzer-Levy and João Gante and Simon Geisler and Soham Ghosal and Antonious M. Girgis and Tamara von Glehn and Alec Go and Alhaad Gokhale and Alex Grills and Yiming Gu and Mayank Gupta and Pramod Gupta and Guru Guruganesh and Raia Hadsell and Hamza Harkous and Jitendra Harlalka and Demis Hassabis and Anja Hauth and Joe Heyward and Arian Hosseini and Chih-Yang Hsia and I-Hung Hsu and Xiaopeng Huang and Yangsibo Huang and Kevin Hui and Adrian Hutter and Te I and Fotis Iliopoulos and Advait Jain and Ganesh Jawahar and Ziwei Ji and Qilin Jin and Melvin Johnson and Kandarp Joshi and Arun Kandoor and Wang-Cheng Kang and Koray Kavukcuoglu and Mehran Kazemi and Kathleen Kenealy and Amr Khalifa and Phoebe Kirk and Ivan Korotkov and Suraj Kothawade and Vitaly Kovalev and Neel Kovelamudi and Adam Kraft and Ravin Kumar and Vivek Kumar and Harish Kuppam and Justin Lannin and Chen-Yu Lee and Seungji Lee and Dmitry Lepikhin and Alon Levkovitch and Dongdong Li and Qiujia Li and Valentin Liévin and Ethan Lin and Ziqian Lin and Casper Liu and Tianlin Liu and Tianqi Liu and Xin Liu and Ivan Lobov and Mayank Lunayach and Min Ma and Gagan Madan and Andrii Maksai and Eric Malmi and Michal Matuszak and Daniel McDuff and Gaurav Menghani and Maciej Mikuła and Daniil Mirylenka and Karolis Misiunas and Vedant Misra and Andreea Mitran and Kareem Mohamed and Maksim Mukha and Eric Noland and James O'Donnell and Brendan O'Donoghue and Kate Olszewska and Bernett Orlando and Wanqiong Pan and Rina Panigrahy and Unnati Parekh and Nicolas Perez-Nieves and Chunjong Park and Eric Paskie and Liqian Peng and Bryce Petrini and Slav Petrov and Jonas Pfeiffer and Bilal Piot and Martyna Plomecka and Siim Poder and Octavio Ponce and Arijit Pramanik and David Racz and Anish Rajan and Michelle Ramanovich and Anand Rao and Marvin Ritter and Vitor Rodrigues and Evan Rosen and Mikołaj Rybiński and Noveen Sachdeva and Michaël E. Sander and Rohit Sathyanarayana and Sagar Savla and Samuel Schmidgall and Tal Schuster and George Scrivener and Benoit Seguin and Andrew Sellergren and Aliaksei Severyn and Izhak Shafran and Dhruv Shah and Bobak Shahriari and Yuan Shangguan and Ashish Shenoy and Pradeep Shenoy and Rakesh Shivanna and Pauline Sho and Lucas Spangher and Wojciech Stokowiec and Tim Strother and Yao Su and Yinghao Sun and Mukund Sundararajan and Andrea Tacchetti and Mor Hazan Taege and Pouya Tafti and Jean Tarbouriech and Chetan Tekur and Shantanu Thakoor and Rahul Thapa and Madeleine Traverse and Lenart Treven and Tao Tu and Chien Te Tung and Çağlar Ünlü and Petar Veličković and Malini Pooni Venkat and Sagar Gubbi Venkatesh and Vidya Venkiteswaran and Francesco Visin and Alex Vitvitskyi and Kiran Vodrahalli and Weiyi Wang and Xin Wang and Tris Warkentin and Jan Wassenberg and John Wieting and Cindy Wu and Lechao Xiao and Hao Xu and Yuhui Xu and Fuzhao Xue and Arun Yadav and Jun Yan and Antoine Yang and Lin Yang and Ming-Hsuan Yang and Ziyu Ying and Jae Hyeon Yoo and Morteza Zadimoghaddam and Sajjad Zafar and Fred Zhang and Jiageng Zhang and Jianyi Zhang and Xiaofan Zhang and Chao Zhao and David Zhou and Chen Zou},
  url={https://arxiv.org/abs/2607.02770v2},
  year={2026}
}

@misc{hinton2015distilling,
  title={Distilling the knowledge in a neural network},
  author={Hinton, Geoffrey and Vinyals, Oriol and Dean, Jeff},
  url={https://arxiv.org/abs/1503.02531v1},
  year={2015}
}

@article{lu2025onpolicy,
  author = {Kevin Lu and {Thinking Machines Lab}},
  title = {On-Policy Distillation},
  journal = {Thinking Machines Lab: Connectionism},
  year = {2025},
  note = {https://thinkingmachines.ai/blog/on-policy-distillation},
  doi = {10.64434/tml.20251026},
}

\newpage

\appendix
\crefalias{section}{appendix}
\crefalias{subsection}{subappendix}
\crefalias{subsubsection}{subsubappendix}

\section{Additional Supervision from Harness Awareness}
\label{app:had-theory}

\subsection{Practical Implementation}
\label{app:had-implementation}

\paragraph{Scaling preference margins ($\beta$).}
The margin $\Delta_t$ compares mean log-likelihoods over action tokens, so its scale does not grow with the length of the compared actions.
The coefficient $\beta$ then sets how quickly the logistic loss saturates in this margin.
With a small $\beta$, the loss is close to linear in $\Delta_t$ and every retained pair contributes a similar gradient, whereas a large $\beta$ concentrates the gradient on pairs whose margin is still small or negative.
Since $\lambda_{\mathcal B}$ rescales the preference gradient to a fixed fraction of the distillation gradient, $\beta$ mainly changes how this gradient is distributed across pairs rather than its overall size.
We use $\beta=0.5$ in all environments, selected together with $\rho$ on a validation split held out from the training tasks.
When the two teacher actions are identical, $\Delta_t=0$ for every $\theta$, so the pair contributes a constant loss and no gradient.

\paragraph{Filtering invalid actions ($m_t$).}
The checker $V$ judges the positive teacher action $\tilde a_t^\phi$ only against the records that the harness computes at the visited state $\tilde x_t^\theta$, and uses no task rewards or future information.
At a high level, it rejects an action that the harness records contradict, that is, an action the harness itself would report as impossible or ineffective at this state.
Examples include an action outside the available actions that the harness lists, an action that uses an object the agent does not hold, and an action that repeats one reported as having had no effect at the same state.
The checks for each environment are listed in \cref{app:harness} and \cref{app:implementation_details}.
Negative actions are not checked, since lowering the likelihood of an invalid action does not harm the student.
A rejected pair is removed from the preference term only, and the teacher response at that state still enters $\ell_{\mathrm{dist},t}$.
The checker is therefore conservative, as it removes actions that contradict the records but cannot detect actions that are valid yet do not advance the task (Section~\ref{sec:had-theory}).

\paragraph{Balancing distillation and preference losses ($\lambda$).}
In practice, we compute the mini-batch losses
\begin{equation}
    D_{\mathcal B}=\frac1n\sum_{t\in\mathcal B}\ell_{\mathrm{dist},t},
    \qquad P_{\mathcal B}=\frac{1}{n_V}\sum_{t\in\mathcal B}m_t\ell_{\mathrm{pref},t},
    \qquad n_V=\sum_{t\in\mathcal B}m_t,
\end{equation}
for a minibatch $\mathcal B$ of $n$ records, where $P_{\mathcal B}=0$ when $n_V=0$, and perform backpropagation on the combined loss $D_{\mathcal B} + \lambda_{\mathcal B} P_{\mathcal B}$.
Rather than using a fixed value of $\lambda_{\mathcal B}$, we introduce a per-mini-batch heuristic that scales the newly introduced preference loss-gradient so that its contribution to the parameter update is approximately half that of the original distillation loss-gradient:
\begin{equation}
    \lambda_{\mathcal B}
    =\begin{cases}
        \operatorname{stopgrad}\!\left[
            \rho\dfrac{\lVert\nabla_\theta D_{\mathcal B}\rVert_2}
            {\lVert\nabla_\theta P_{\mathcal B}\rVert_2}\right],
            &\nabla_\theta P_{\mathcal B}\neq0,\\
        0,&\nabla_\theta P_{\mathcal B}=0,
    \end{cases}
    \qquad \rho=0.5.
\end{equation}

\subsection{Ablation Variants}
\label{app:had-ablation-variants}

\paragraph{Response distillation alone and no filtering.}
Setting $\lambda=0$ removes the auxiliary term, whereas setting $m_t\equiv1$ retains every positive preference candidate.

\paragraph{Positive-only action supervision.}
Replace $\ell_{\mathrm{pref},t}$ by
\begin{equation}
    \ell_{\mathrm{pos},t}
    =-\frac{1}{|\tilde a_t^\phi|}\log\pi_\theta(\tilde a_t^\phi\mid\tilde x_t^\theta,\tilde z_t^\theta),
    \label{eq:had-positive-only-loss}
\end{equation}
retaining the positive teacher actions, student reasoning prefixes, length normalization, and validity masks.
This preserves direct student-prefix supervision without the stripped-history comparison.

\paragraph{Full-response preferences.}
Replace $\Delta_{t}$ and $\ell_{\mathrm{pref},t}$ by
\begin{align}
    \Delta_{\mathrm{full},t}
    &=\frac{1}{|\tilde y_t^\phi|}
    \log\pi_\theta(\tilde y_t^\phi\mid\tilde x_t^\theta)
      -\frac{1}{|y_t^\phi|}\log\pi_\theta(y_t^\phi\mid\tilde x_t^\theta),\\
    \ell_{\mathrm{full},t}
    &=-\log\sigma(\beta\Delta_{\mathrm{full},t}).
    \label{eq:had-full-response-preference}
\end{align}

Each target contains its own teacher reasoning and action, and both are scored under the same augmented history.
For $y=(z,a)$, autoregressive factorization gives
\begin{equation}
    \frac{\log\pi_\theta(z,a\mid\tilde x)}{|z|+|a|}
    =\frac{\log\pi_\theta(z\mid\tilde x)
           +\log\pi_\theta(a\mid\tilde x,z)}{|z|+|a|}.
    \label{eq:had-span-weighting}
\end{equation}
Thus this variant changes both the scored span and the action-conditioning prefix, rather than merely adding reasoning supervision to an otherwise identical comparison.

\subsection{What Harness-Aware Action Preference Learning Adds}
\label{app:had-preference-adds}

We fix a realized harness-induced history $\tilde x^\theta$, its stripped counterpart $x^\theta$, and the student reasoning prefix $z_s=\tilde z^\theta$.
Write the teacher response laws and action marginals as
\begin{align}
    \textstyle
    J_+(z,a)=\pi_\phi(z,a\mid\tilde x^\theta),\quad
    P_+(a)=\sum_z J_+(z,a), \\
    \textstyle
    J_-(z,a)=\pi_\phi(z,a\mid x^\theta),\quad
    P_-(a)=\sum_z J_-(z,a).
    \label{eq:had-action-marginals}
\end{align}
Teacher draws are independent of each other and of current student reasoning conditional on the history.
Let $Q(z,a)=Q_Z(z)Q_A(a\mid z)$ be a student response law at $\tilde x^\theta$, where $z$ includes the boundary before the action.
Set $L(a)=|a|>0$ and $D(z,a)=|(z,a)|>0$, and take student likelihoods to be positive wherever they are scored.
The response-distillation risk is
\begin{equation}
    \mathcal D(Q)
    =-\sum_{z,a}\frac{J_+(z,a)}{D(z,a)}\log Q(z,a).
    \label{eq:had-response-risk}
\end{equation}
It depends on the full positive response law $J_+$, but not on the negative teacher law.
Zero-weight imitation terms are interpreted as zero, including at zero student probabilities.

\begin{definition}[Unfiltered preference risk]
\label{def:had-comparison-risk}
For probability laws $p_+$ and $p_-$ on a common finite, nonempty alphabet $\mathcal S$, independent draws $\xi^+\sim p_+$ and $\xi^-\sim p_-$, and scores $g:\mathcal S\to\mathbb R$, define
\begin{equation}
    \mathcal R_{p_+,p_-}(g)
    =\mathbb E[-\log\sigma(g(\xi^+)-g(\xi^-))].
    \label{eq:had-comparison-risk}
\end{equation}
For action comparisons, define the unweighted, unmasked preference risk and the action-preference score by
\begin{equation}
    \mathcal R(f)=\mathcal R_{P_+,P_-}(f),
    \qquad
    f_Q(a)=\frac{\beta}{L(a)}\log Q_A(a\mid z_s),
    \quad\beta>0.
    \label{eq:had-population-risk}
\end{equation}
\end{definition}

\begin{proposition}[Bayes-optimal action-preference scores]
\label{prop:had-preference}
Let $P_+$ and $P_-$ be strictly positive probability laws on a finite, nonempty action set $\mathcal A$, with independent sampling.
The minimizers of $\mathcal R$ over all $f:\mathcal A\to\mathbb R$ are exactly
\begin{equation}
    f^*(a)=s(a)+\kappa,
    \qquad s(a)=\log\frac{P_+(a)}{P_-(a)},
    \quad\kappa\in\mathbb R.
    \label{eq:had-score-optimum}
\end{equation}
\end{proposition}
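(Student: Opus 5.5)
The risk can be written explicitly as a finite double sum,
\[
\mathcal R(f)=\sum_{a,b\in\mathcal A}P_+(a)P_-(b)\,\ell\bigl(f(a)-f(b)\bigr),\qquad \ell(u)=-\log\sigma(u)=\log(1+e^{-u}).
\]
I will prove that $f^*$ is a global minimizer and then show that minimizers are unique up to a constant. The tool is convexity together with the first-order conditions. The map $f\mapsto\mathcal R(f)$ on $\mathbb R^{\mathcal A}$ is convex, because $\ell$ is convex and each $f(a)-f(b)$ is linear in $f$. Hence every stationary point is a global minimizer. The risk is also invariant under $f\mapsto f+\kappa\mathbf 1$, which explains the free constant $\kappa$.

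\textbf{Stationarity.} Differentiate with respect to $f(c)$, using $\ell'(u)=-\sigma(-u)$. The pairs with first argument $c$ contribute $-\sum_b P_+(c)P_-(b)\sigma(f(b)-f(c))$, and the pairs with second argument $c$ contribute $+\sum_a P_+(a)P_-(c)\sigma(f(c)-f(a))$. The diagonal term $a=b=c$ cancels in the combined expression. Now plug in $f=s+\kappa$, which gives $e^{f(a)}=e^\kappa P_+(a)/P_-(a)$. Then
\[
\sigma(f(b)-f(c))=\frac{P_+(b)P_-(c)}{P_+(b)P_-(c)+P_+(c)P_-(b)}.
\]
With this, the first sum becomes $\sum_b \frac{P_+(c)P_-(b)P_+(b)P_-(c)}{P_+(b)P_-(c)+P_+(c)P_-(b)}$. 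The second sum is the identical expression after renaming $a\to b$, since $\sigma(f(c)-f(a))$ equals $P_+(c)P_-(a)$ divided by the same symmetric denominator. The gradient therefore vanishes termwise. A cleaner way to see this is to pair the $(a,b)$ and $(b,a)$ terms: each pair has the form $p\,\ell(u)+q\,\ell(-u)$ with $p=P_+(a)P_-(b)$, $q=P_+(b)P_-(a)$ and $u=f(a)-f(b)$. This is minimized exactly at $u=\log(p/q)=s(a)-s(b)$, because the function is strictly convex in $u$ when $p,q>0$. So $f^*$ minimizes every paired term at once, and it is therefore a global minimizer.

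\textbf{Exhaustiveness.} This is the step that needs care, and it follows from the pairing argument. Positivity of $P_\pm$ makes $p,q>0$ for every unordered pair $\{a,b\}$ with $a\ne b$, and each paired term $p\ell(u)+q\ell(-u)$ has a unique minimizer $u=s(a)-s(b)$. The diagonal terms are the constant $P_+(a)P_-(a)\log 2$. Since $f^*$ attains every pairwise minimum simultaneously, the minimum value of $\mathcal R$ equals the sum of the pairwise minima. Any minimizer $f$ must then attain each pairwise minimum as well; otherwise its risk would exceed the sum by a strictly positive amount. Hence $f(a)-f(b)=s(a)-s(b)$ for all $a,b$, so $f-s$ is constant, and the minimizers are exactly $s+\kappa$. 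If $|\mathcal A|=1$, the risk is constant and every $f$ has the form $s+\kappa$, so the claim holds trivially. The main obstacle is this separation argument. It depends on strict positivity, which guarantees that both weights of every off-diagonal pair are nonzero and thus that strict convexity pins down each difference.
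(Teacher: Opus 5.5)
Your proof is correct and is essentially the paper's argument: grouping the $(a,b)$ and $(b,a)$ terms into $p\,\ell(u)+q\,\ell(-u)$ is the paper's symmetrization $\mathcal R(f)=\sum_{a,b}\mu(a,b)\operatorname{CE}(\eta(a,b),\sigma(f(a)-f(b)))$ in different notation. Each term is uniquely minimized at log-odds $s(a)-s(b)$, so simultaneous attainment gives both optimality of $s+\kappa$ and uniqueness up to a constant, and your separate convexity and stationarity computation, while correct, is redundant once this pairwise argument is in place.
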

\begin{proof}
Write $w_{ab}=P_+(a)P_-(b)$, $g(a,b)=f(a)-f(b)$, and
\begin{equation}
    \mu(a,b)=\frac{w_{ab}+w_{ba}}2,
    \qquad \eta(a,b)=\frac{w_{ab}}{w_{ab}+w_{ba}}.
\end{equation}
Symmetrizing the risk gives
\begin{equation}
    \mathcal R(f)
    =\sum_{a,b}\mu(a,b)\operatorname{CE}(\eta(a,b),\sigma(g(a,b))),
    \label{eq:had-risk-cross-entropy}
\end{equation}
where $\operatorname{CE}(p,v)=-p\log v-(1-p)\log(1-v)$.
The weights $\mu$ sum to one, and full support makes every weight positive and every $\eta(a,b)$ strictly between zero and one.
Each cross-entropy is uniquely minimized at its target probability, with log-odds
\begin{equation}
    \log\frac{\eta(a,b)}{1-\eta(a,b)}=s(a)-s(b).
\end{equation}
Thus $f=s$ minimizes all terms simultaneously, and equality requires $f(a)-f(b)=s(a)-s(b)$ for every pair.
Equivalently, $f-s$ is constant.
\end{proof}
The argument also yields the exact excess-risk identity
\begin{equation}
    \mathcal R(f)-\mathcal R(f^*)
    =\mathbb E_{(a,b)\sim\mu}
    D_{\mathrm{KL}}\!\left(
        \operatorname{Ber}(\eta(a,b))\,\|\,
        \operatorname{Ber}(\sigma(f(a)-f(b)))
    \right).
    \label{eq:had-excess-risk}
\end{equation}
This calibrates scores to teacher-input provenance, not to measured action utility.

\begin{observation}[Contrast is not identified by response imitation]
\label{obs:had-additional-contrast}
The positive response law $J_+$ determines $\mathcal D$, but does not generally determine the optimal action-preference score differences, which also depend on $P_-$.
\end{observation}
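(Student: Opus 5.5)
The plan is to prove the observation by exhibiting a counterexample. I will fix one positive response law $J_+$, pair it with two different negative laws $J_-$ and $J_-'$, and show that the optimal score differences of \cref{prop:had-preference} change while $\mathcal D$ stays the same.

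\textbf{Step 1: $\mathcal D$ ignores the negative law.} By \cref{eq:had-response-risk}, $\mathcal D(Q)$ is built only from $J_+$, the length weights $D(z,a)$, and $Q$. It never references $J_-$ or $P_-$. So two configurations that share $J_+$ (and hence $P_+$) have identical functions $Q\mapsto\mathcal D(Q)$. They therefore have the same set of minimizers and the same value at every $Q$.

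\textbf{Step 2: the optimal score differences move with $P_-$.} Take $\mathcal A=\{a,b\}$ and any strictly positive $J_+$, for instance uniform over a single reasoning string, so $P_+(a)=P_+(b)=1/2$. Choose two strictly positive negative laws whose action marginals differ:
\begin{equation*}
P_-(a)=P_-(b)=1/2, \qquad P_-'(a)=1/4,\; P_-'(b)=3/4.
\end{equation*}
By \cref{prop:had-preference}, the minimizers of $\mathcal R_{P_+,P_-}$ have $f^*(a)-f^*(b)=s(a)-s(b)=0$. Under $P_-'$ the minimizers have difference
\begin{equation*}
\log\frac{P_+(a)}{P_-'(a)}-\log\frac{P_+(b)}{P_-'(b)}=\log 3\neq 0.
\end{equation*}
Since the proposition says the minimizers are unique up to the additive constant $\kappa$, these difference sets are disjoint. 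No functional of $J_+$ alone can output the optimal differences in both configurations.

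\textbf{Step 3: extend to the general case.} For arbitrary $J_+$ and $|\mathcal A|\ge 2$, perturb $P_-$ by moving mass between two actions while keeping it strictly positive. This changes $s(a)-s(b)$ for that pair, so the non-identification holds generically rather than only in the toy case. This is the sense of ``generally'' in the statement. When $|\mathcal A|=1$ the statement is vacuous, because there are no score differences to identify.

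\textbf{Main obstacle.} The only delicate point is the precise meaning of ``determine.'' I would make it explicit as non-identifiability: there exist two joint configurations with equal $J_+$ (hence equal $\mathcal D$) but different optimal difference sets. The uniqueness clause of \cref{prop:had-preference} is what guarantees the two sets really differ, rather than merely differ for one particular choice of minimizer.
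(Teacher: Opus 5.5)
Your proposal is correct and follows the paper's argument. The paper also fixes $J_+(z_0,a_1)=J_+(z_0,a_2)=1/2$, notes that $\mathcal D$ is unchanged, and applies \cref{prop:had-preference} to two negative action laws; its symmetric pair $(3/4,1/4)$ and $(1/4,3/4)$ gives optimal differences $-\log 3$ and $\log 3$, so it additionally shows the preferred action can flip, whereas your pair $(1/2,1/2)$ versus $(1/4,3/4)$ gives $0$ versus $\log 3$, which suffices equally for non-identification.
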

For example, fix $J_+(z_0,a_1)=J_+(z_0,a_2)=1/2$ and consider two negative action laws,
\begin{equation}
    P_-^{(1)}=(3/4,1/4),\qquad P_-^{(2)}=(1/4,3/4).
\end{equation}
The distillation risks agree for every $Q$, whereas Proposition~\ref{prop:had-preference} gives
\begin{equation}
    f_1^*(a_1)-f_1^*(a_2)=-\log3,
    \qquad f_2^*(a_1)-f_2^*(a_2)=\log3.
    \label{eq:had-contrast-separation}
\end{equation}
The additional query therefore identifies a contrast absent from the positive response law alone.
This does not preclude useful harness-dependent behavior learned through response imitation.

\subsection{Regularization Bias and Reference-Relative Preferences}
\label{app:had-reference-choice}

\paragraph{Regularization bias in isolation.}
Under the assumptions of Proposition~\ref{prop:had-preference}, take $V\equiv1$ and a common action length $L$.
When $P_+=P_-$, the optimal scores are constant, so the isolated preference optimum over normalized action laws is uniform even if the common teacher law is not.
For example, if $P_+=P_-=(0.9,0.1)$ over two actions, the positive--negative pairs $(a_1,a_2)$ and $(a_2,a_1)$ each occur with probability $0.09$.
These symmetric comparisons yield the preference-only optimum $(0.5,0.5)$, rather than the teacher's $(0.9,0.1)$.
Thus reference-free preferences can flatten action probabilities even without a harness-induced action contrast.

\paragraph{A reference-relative variant.}
To address this bias, one can introduce a fixed reference policy $\pi_{\mathrm{ref}}$ that assigns positive probability to every scored action, define
\begin{align}
    \Delta_{\mathrm{rel},t}(\theta)
    &=\frac{1}{|\tilde a_t^\phi|}
      \log\frac{\pi_\theta(\tilde a_t^\phi\mid\tilde x_t^\theta,\tilde z_t^\theta)}
                    {\pi_{\mathrm{ref}}(\tilde a_t^\phi\mid\tilde x_t^\theta,\tilde z_t^\theta)}
    -\frac{1}{|a_t^\phi|}
      \log\frac{\pi_\theta(a_t^\phi\mid\tilde x_t^\theta,\tilde z_t^\theta)}
                    {\pi_{\mathrm{ref}}(a_t^\phi\mid\tilde x_t^\theta,\tilde z_t^\theta)},
      \label{eq:had-reference-relative-margin}\\
    \ell_{\mathrm{rel},t}(\theta)
    &=-\log\sigma\!\left(\beta\Delta_{\mathrm{rel},t}(\theta)\right),
      \label{eq:had-reference-relative-loss}
\end{align}
and replace $\ell_{\mathrm{pref},t}$ by $\ell_{\mathrm{rel},t}$ in Equation~\ref{eq:had-objective}, leaving all other components unchanged.
Under the assumptions above, with student and reference action laws normalized on $\mathcal A$, $P_+=P_-$ now yields the reference law as the isolated optimum.

However, choosing the reference policy is nontrivial: a teacher reference evaluates continuations conditioned on student-generated reasoning, rather than under the marginal $P_+$, while a frozen student may itself make poor use of the harness.
Thus, neither provides a canonical prior over the decision contexts visited by the student.
Consistent with this concern, an ablation that sets $\pi_{\mathrm{ref}}=\pi_{\phi}$ yields worse empirical performance (cf. \cref{app:ablation-extra}).
For equal-length actions, reference-free scoring is equivalent to the uniform reference $\pi_{\mathrm{ref}}\equiv1/|\mathcal A|$.
Note that HAD also retains direct teacher supervision through response distillation, with the preference gradient balanced against that signal.
The flattening bias therefore remains an auxiliary regularization trade-off; the isolated uniform optimum does not imply a uniform solution for the combined objective.

\subsection{What Harness-Aware Validity Filtering Adds}
\label{app:had-filtering-adds}

\begin{definition}[Filtered preference risk]
\label{def:had-filtered-objective}
For a fixed checker $V:\mathcal A\to\{0,1\}$ and independent teacher actions, define
\begin{equation}
    \mathcal R_V(f)
    =\sum_{a,b}V(a)P_+(a)P_-(b)
      \log\!\left(1+e^{f(b)-f(a)}\right).
    \label{eq:had-masked-risk}
\end{equation}
\end{definition}

\begin{observation}[Positive-action filtering]
\label{obs:had-filtering}
Under the full-support and independence assumptions of Proposition~\ref{prop:had-preference}, fix a checker $V$ and let $\gamma=\mathbb E_{P_+}[V(a)]$.
The filtered risk in Definition~\ref{def:had-filtered-objective} satisfies, for $\gamma>0$,
\begin{equation}
    \mathcal R_V(f)=\gamma\mathcal R_{P_+^V,P_-}(f),
    \qquad P_+^V(a)=\frac{V(a)P_+(a)}{\gamma}.
    \label{eq:had-filtered-risk}
\end{equation}
For $\gamma=0$, the risk is identically zero.
For each accepted--rejected pair, only the comparison favoring the accepted action remains, and its contribution strictly decreases as the accepted action's score margin increases.
\end{observation}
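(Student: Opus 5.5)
The plan is to handle the identity in \cref{eq:had-filtered-risk} by direct algebra on the double sum in Definition~\ref{def:had-filtered-objective}. Then I treat the degenerate case $\gamma=0$, and finally read off the structure of the surviving comparisons.

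For $\gamma>0$, the first step is to rewrite each weight as $V(a)P_+(a)P_-(b)=\gamma\,P_+^V(a)P_-(b)$, which holds by the definition of $P_+^V$. Next I check that $P_+^V$ is a probability law on $\mathcal A$. It is nonnegative because $V\in\{0,1\}$ and $P_+>0$, and it sums to one because $\sum_a V(a)P_+(a)=\mathbb E_{P_+}[V]=\gamma$. Pulling $\gamma$ out of the sum leaves $\sum_{a,b}P_+^V(a)P_-(b)\log(1+e^{f(b)-f(a)})$. Since $-\log\sigma(x)=\log(1+e^{-x})$, this sum equals $\mathbb E[-\log\sigma(f(\xi^+)-f(\xi^-))]$ with independent $\xi^+\sim P_+^V$ and $\xi^-\sim P_-$, which is exactly $\mathcal R_{P_+^V,P_-}(f)$ from Definition~\ref{def:had-comparison-risk}. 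Independence is what lets the joint law of the pair factor as the product used here.

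The only subtlety is that Definition~\ref{def:had-comparison-risk} is stated for laws on a common alphabet without requiring full support, so $P_+^V$ with zeros is admissible. Proposition~\ref{prop:had-preference}, which does need full support, is not invoked, and so nothing more is required. For $\gamma=0$, we have $V(a)P_+(a)=0$ for every $a$, since $P_+>0$ forces $V\equiv0$. Every summand then vanishes and $\mathcal R_V\equiv0$.

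For the last claim, take an accepted action $a$ with $V(a)=1$ and a rejected action $b$ with $V(b)=0$. The ordered term $(a,b)$ keeps its weight $P_+(a)P_-(b)>0$, while the term $(b,a)$ has weight $V(b)P_+(b)P_-(a)=0$. So only the comparison favoring $a$ survives, with contribution $P_+(a)P_-(b)\log(1+e^{-(f(a)-f(b))})$. This is strictly decreasing in the margin $f(a)-f(b)$, because $x\mapsto\log(1+e^{-x})$ has derivative $-\sigma(-x)<0$ and the weight is positive by full support.

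No step is a real obstacle. The point needing most care is stating the conditional law $P_+^V$ correctly, together with the full-support assumption that makes the surviving weight strictly positive.
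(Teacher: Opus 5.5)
Your proposal is correct and follows essentially the same route as the paper's proof: substitute $V(a)P_+(a)=\gamma P_+^V(a)$ to obtain the identity, note that every positive-candidate weight vanishes when $\gamma=0$, and observe that for an accepted--rejected pair only the oriented weight $P_+(a)P_-(b)>0$ survives, with a logistic contribution strictly decreasing in the margin. The additional checks you include are details the paper leaves implicit: that $P_+^V$ is a normalized law, and that Definition~\ref{def:had-comparison-risk} admits laws without full support.
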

\begin{proof}
Substituting $V(a)P_+(a)=\gamma P_+^V(a)$ into the filtered risk gives the identity; when $\gamma=0$, all positive-candidate weights vanish.
If $V(a)=1$ and $V(b)=0$, the oriented comparison weights are $w^V_{ab}=P_+(a)P_-(b)>0$ and $w^V_{ba}=0$.
The retained comparison contributes
\begin{equation}
    w^V_{ab}\log\!\left(1+e^{f(b)-f(a)}\right),
    \label{eq:had-screened-comparison}
\end{equation}
whose derivative with respect to $f(a)-f(b)$ is strictly negative.
\end{proof}
Without filtering, a rejected action can have a larger $P_+/P_-$ ratio than an accepted action.
Validity filtering addresses this failure of harness access as a preference proxy, without requiring checks on negative candidates during training.

\subsection{Why Action-Only Preference Rather Than Full-Response Preference}
\label{app:had-joint-preferences}
Section~\ref{sec:had-action-design} motivates a conditional action comparison at the student's own reasoning prefix.
Full-response preferences instead score each candidate together with its teacher reasoning, changing both the scored span and the action-conditioning context.
We show that this substitution can change the preferred actions even when teacher action marginals stay fixed.
Appendix~\ref{app:had-prefix-supervision} then formalizes how the conditional comparison complements response distillation.
Throughout, teacher reasoning still generates the candidates and the complete positive response remains a distillation target.

\begin{definition}[Full-response preference risk]
\label{def:had-full-response-risk}
Let $J_+$ and $J_-$ be probability laws on a common finite, nonempty response set $\mathcal Y$, whose action projection is $\mathcal A$, with independent teacher draws.
Definition~\ref{def:had-comparison-risk} gives the unweighted, unmasked full-response risk and score
\begin{equation}
    \mathcal R_{\mathrm{full}}(F)=\mathcal R_{J_+,J_-}(F),
    \qquad F_Q(z,a)=\frac{\beta}{D(z,a)}\log Q(z,a).
    \label{eq:had-full-response-risk}
\end{equation}
\end{definition}
This is the population counterpart of Equation~\ref{eq:had-full-response-preference} with $V\equiv1$.

\begin{corollary}[Full-response contrast and conditional action odds]
\label{cor:had-full-response-contrast}
Under Definition~\ref{def:had-full-response-risk}, assume $J_+$ and $J_-$ are strictly positive on $\mathcal Y$.
All minimizers of $\mathcal R_{\mathrm{full}}$ over unrestricted scores are
\begin{equation}
    F^*(z,a)
    =\log\frac{J_+(z,a)}{J_-(z,a)}+\kappa
    =\underbrace{\log\frac{P_+(a)}{P_-(a)}}_{\text{action contrast}}
     +\underbrace{\log\frac{J_+(z\mid a)}{J_-(z\mid a)}}_{\text{conditional reasoning contrast}}
     +\kappa,
    \quad\kappa\in\mathbb R.
    \label{eq:had-reasoning-decomposition}
\end{equation}
If additionally $\mathcal Y=\mathcal Z\times\mathcal A$ and all actions have length $L$, write $d_z=|z|+L$ and $r_z(a)=\log[J_+(z\mid a)/J_-(z\mid a)]$.
Let $q^*_{\mathrm{act}}$ and $Q^*_{\mathrm{full}}$ be the isolated unfiltered optima over positive laws normalized on $\mathcal A$ and $\mathcal Y$, respectively.
Then, for every $z,a,b$,
\begin{align}
    \frac{\beta}{L}\log\frac{q^*_{\mathrm{act}}(a)}{q^*_{\mathrm{act}}(b)}
    &=s(a)-s(b),\\
    \frac{\beta}{d_z}\log\frac{Q^*_{\mathrm{full}}(a\mid z)}{Q^*_{\mathrm{full}}(b\mid z)}
    &=s(a)-s(b)+r_z(a)-r_z(b).
    \label{eq:had-conditional-action-margins}
\end{align}
\end{corollary}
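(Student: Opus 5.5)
The first claim is Proposition~\ref{prop:had-preference} applied with a different alphabet. I take $\mathcal S=\mathcal Y$, $p_+=J_+$ and $p_-=J_-$ in Definition~\ref{def:had-comparison-risk}. Both laws are strictly positive on the finite set $\mathcal Y$, and the draws are independent. The proof of Proposition~\ref{prop:had-preference} never used that the alphabet consists of actions; it used only full support, finiteness and independence. So the unrestricted minimizers are exactly $F^*=\log(J_+/J_-)+\kappa$. For the decomposition I factor $J_\pm(z,a)=P_\pm(a)J_\pm(z\mid a)$, which is well defined because full support of $J_\pm$ gives $P_\pm(a)>0$ for every $a\in\mathcal A$. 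This also makes $s$ well defined.

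For the normalized optima I first need to check that the isolated optimum is attained by a positive law. On the action side, the score $f_q(a)=(\beta/L)\log q(a)$ ranges over positive laws $q$ normalized on $\mathcal A$. The question is whether some such $q$ realizes $f^*=s+\kappa$. Taking $q^*_{\mathrm{act}}(a)\propto\exp(L s(a)/\beta)$ works: it is positive, and $\kappa$ absorbs the normalizing constant. Since $\mathcal R$ depends only on differences, its infimum over this restricted class equals the unrestricted minimum. The minimizers in the class are therefore exactly the laws whose score lies in $s+\mathbb R$, and this gives $\frac{\beta}{L}\log\frac{q^*_{\mathrm{act}}(a)}{q^*_{\mathrm{act}}(b)}=s(a)-s(b)$.

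The full-response side follows the same pattern. I need a positive law $Q$ on $\mathcal Y=\mathcal Z\times\mathcal A$ with $(\beta/d_z)\log Q(z,a)=F^*(z,a)$ for a single common $\kappa$. I choose $Q(z,a)=\exp(d_z F^*(z,a)/\beta)$. The normalization requirement $\sum_{z,a}\exp(d_z(\log(J_+/J_-)+\kappa)/\beta)=1$ has a unique solution $\kappa$: the left side is continuous and strictly increasing in $\kappa$, tends to $0$ as $\kappa\to-\infty$, and tends to $\infty$ as $\kappa\to\infty$, because all $d_z>0$. This is the one step that is not just a copy of the action case, since the lengths $d_z$ vary with $z$ and so the normalizer cannot be pulled out as a simple additive constant. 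The intermediate value argument handles it. Hence every normalized minimizer satisfies $(\beta/d_z)\log Q^*_{\mathrm{full}}(z,a)=F^*(z,a)$, with one $\kappa$ shared across all responses.

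Finally I take the conditional action odds at a fixed $z$. We have $Q^*_{\mathrm{full}}(a\mid z)/Q^*_{\mathrm{full}}(b\mid z)=Q^*_{\mathrm{full}}(z,a)/Q^*_{\mathrm{full}}(z,b)$. Because $d_z=|z|+L$ is the same for both actions, the common length cancels, and so does $\kappa$:
\begin{equation*}
\frac{\beta}{d_z}\log\frac{Q^*_{\mathrm{full}}(a\mid z)}{Q^*_{\mathrm{full}}(b\mid z)}=F^*(z,a)-F^*(z,b)=s(a)-s(b)+r_z(a)-r_z(b).
\end{equation*}
This equal-length cancellation is where the hypothesis that all actions have length $L$ is used.
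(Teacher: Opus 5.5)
Your proof is correct and follows the paper's argument. You apply Proposition~\ref{prop:had-preference} to the response alphabet, factor $J_\pm(z,a)=P_\pm(a)J_\pm(z\mid a)$, and realize the score optima by positive normalized laws with a unique normalizing constant. You then subtract scores at a shared prefix, where the equal lengths $d_z$ let the constant and the reasoning marginal cancel. The only difference is that you re-derive inline the monotone-normalizer realization (Equation~\ref{eq:had-density-optimum}), which the paper cites from Section~\ref{app:had-score-probabilities}.
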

\begin{proof}
Applying Proposition~\ref{prop:had-preference} to the response alphabet and factoring $J_\pm(z,a)=P_\pm(a)J_\pm(z\mid a)$ gives the joint score target.
For positive student laws normalized on $\mathcal Y$, Equation~\ref{eq:had-density-optimum} gives
\begin{equation}
    Q^*_{\mathrm{full}}(z,a)
    =\exp\!\left[
        \frac{D(z,a)}{\beta}
        \left(\log\frac{J_+(z,a)}{J_-(z,a)}+\kappa_{\mathrm{full}}\right)
    \right],
    \label{eq:had-full-probability-optimum}
\end{equation}
with the unique normalizing constant characterized in Section~\ref{app:had-score-probabilities}.
For equal action lengths, Proposition~\ref{prop:had-preference} gives $(\beta/L)\log q^*_{\mathrm{act}}(a)=s(a)+\kappa_{\mathrm{act}}$, while the joint optimum gives $(\beta/d_z)\log Q^*_{\mathrm{full}}(z,a)=s(a)+r_z(a)+\kappa_{\mathrm{full}}$.
Subtracting scores at a shared prefix cancels the normalizing constants and the reasoning marginal $Q^*_{\mathrm{full},Z}(z)$, proving Equation~\ref{eq:had-conditional-action-margins}.
\end{proof}
These identities concern isolated preference optima, not optima of the combined objective.
Because both length factors are positive, an opposite action ordering requires the additional term $r_z(a)-r_z(b)$ to change the sign of the action contrast.
A reasoning contrast constant across actions at the given prefix cancels instead.

\paragraph{Reasoning-based source discrimination.}
For any action score $f$, its lift $F_f(z,a)=f(a)$ satisfies $\mathcal R_{\mathrm{full}}(F_f)=\mathcal R(f)$ after marginalizing reasoning.
Taking $f=s$ therefore yields
\begin{equation}
    \min_F\mathcal R_{\mathrm{full}}(F)
    \leq\min_f\mathcal R(f).
    \label{eq:had-full-risk-comparison}
\end{equation}
Equality holds exactly when the lifted action optimum is also a full-response optimum, requiring $\log(J_+(z\mid a)/J_-(z\mid a))$ to be a single constant on $\mathcal Y$.
Normalization of each conditional law forces that constant to be zero, so equality is equivalent to $J_+(\cdot\mid a)=J_-(\cdot\mid a)$ for every action.
If $P_+=P_-$, the optimal action risk is $\log2$, but the optimal full-response risk is strictly smaller whenever the conditional reasoning laws differ.
For example, $J_\pm(z,a)=P(a)H_\pm(z)$ with distinct, full-support $H_+$ and $H_-$ yields a full-response target depending only on reasoning.
A lower full-response risk can therefore reflect reasoning-based source discrimination without an action-marginal contrast.
This alone need not change the conditional action ordering: the relevant question is whether the additional reasoning contrast differs across actions.

\paragraph{Marginal versus reasoning-conditioned contrasts.}
Under the same full-support assumptions, Bayes' rule gives
\begin{equation}
    s(a)+r_z(a)
    =\log\frac{J_+(a\mid z)}{J_-(a\mid z)}+c_z,
    \qquad c_z=\log\frac{J_{+,Z}(z)}{J_{-,Z}(z)},
    \label{eq:had-conditional-contrast}
\end{equation}
where $J_{\pm,Z}(z)=\sum_aJ_\pm(z,a)$.
The prefix-dependent constant cancels in action comparisons.
Thus, for equal action lengths, the full-response optimum ranks actions at a shared prefix by a reasoning-conditioned action contrast, whereas the action-only optimum uses the marginal contrast $s$.
The conditional contrast may encode useful reasoning--action compatibility; it is not necessarily a nuisance signal.
Our choice of action-only scoring isolates the harness-induced action contrast after marginalizing teacher reasoning, rather than asserting that marginal contrasts always yield better actions.

\begin{observation}[Negative-reasoning invariance]
\label{obs:had-reasoning-invariance}
At a fixed history and student prefix, the combined objective $\mathcal H_V(Q;\lambda)=\mathcal D(Q)+\lambda\mathcal R_V(f_Q)$ depends on negative teacher responses only through $P_-$.
Holding $J_+$, $P_-$, the checker $V$, and the weight $\lambda\geq0$ fixed therefore leaves it unchanged for every admissible $Q$, including parameterized students and variable action lengths.
This follows directly from the loss definitions and concerns negative teacher reasoning, not the student prefix used to score actions.
\end{observation}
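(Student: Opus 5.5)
The plan is to unfold both summands of $\mathcal H_V(Q;\lambda)$ and check which teacher quantities appear in each. I will work at the fixed history $\tilde x^\theta$, stripped history $x^\theta$, and student prefix $z_s$, and treat $Q$ as any admissible student law. Parameterized students are covered automatically, since the argument is pointwise in $Q$.

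First, for the distillation term I will invoke Equation~\ref{eq:had-response-risk}. There $\mathcal D(Q)=-\sum_{z,a}J_+(z,a)D(z,a)^{-1}\log Q(z,a)$, whose only teacher input is $J_+$; the lengths $D$ are fixed functions of the response. So $\mathcal D$ involves no negative teacher quantity at all, as already remarked after that equation.

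Second, for the preference term I will expand Definition~\ref{def:had-filtered-objective} with $f=f_Q$:
\begin{equation*}
\mathcal R_V(f_Q)=\sum_{a,b}V(a)P_+(a)P_-(b)\log\!\left(1+\exp\!\left[\tfrac{\beta}{L(b)}\log Q_A(b\mid z_s)-\tfrac{\beta}{L(a)}\log Q_A(a\mid z_s)\right]\right).
\end{equation*}
The teacher inputs here are $P_+$, which is a function of $J_+$ by Equation~\ref{eq:had-action-marginals}, together with $P_-$ and the fixed checker $V$. The score $f_Q$ depends only on $Q_A(\cdot\mid z_s)$ and the action lengths $L$. It does not involve the negative teacher reasoning, because the negative action is scored under the \emph{student} prefix $z_s$ and not under its own teacher reasoning. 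This step needs some care. Independence of the teacher draws means the joint law of $(z^-,a^-)$ enters only through summing out $z^-$, which gives exactly $P_-(b)=\sum_z J_-(z,b)$. I will state this marginalization explicitly, because it is the only place where $J_-$ could a priori appear.

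Combining the two expansions, $\mathcal H_V(Q;\lambda)=\mathcal D(Q)+\lambda\mathcal R_V(f_Q)$ is a function of $(J_+,P_-,V,\lambda,Q)$ alone. Two negative laws $J_-,J_-'$ with the same action marginal therefore give identical objectives for every admissible $Q$, with no restriction on action lengths; the $1/L(a)$ normalization is already inside $f_Q$. I expect no real obstacle here, only bookkeeping. The one point to verify is that $\lambda\ge0$ and the conventions for zero-weight terms make the sums well defined regardless of $J_-$, which holds because positivity of the scored student likelihoods was assumed.
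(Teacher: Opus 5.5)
Your proposal is correct and matches the paper's reasoning, which treats the observation as immediate from the loss definitions: $\mathcal D$ involves only $J_+$ (as remarked after Equation~\ref{eq:had-response-risk}), while $\mathcal R_V(f_Q)$ involves only $V$, $P_+$, $P_-$, and the score $f_Q$, which is built from $Q_A(\cdot\mid z_s)$ and $L$. Your explicit marginalization over $z^-$ is a more detailed version of the same bookkeeping; strictly, $z^-$ is summed out because the per-pair loss sees the negative draw only through $a^-$, whereas independence is what supplies the product form $P_+(a)P_-(b)$.
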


\begin{proposition}[Full-response preferences need not preserve action ordering]
\label{prop:had-action-isolation}
With $J_+$ and $P_-$ fixed, changing only negative reasoning can reverse the full-response probability optimum's action ordering both at a shared prefix and marginally over reasoning, even with constant lengths and $V\equiv1$.
The reversed conditional ordering can oppose the action-preference optimum and incur strictly greater action-preference risk.
\end{proposition}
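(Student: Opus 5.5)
The plan is to prove the proposition with an explicit two-action, two-reasoning counterexample. I take $\mathcal Z=\{z_1,z_2\}$ and $\mathcal A=\{a_1,a_2\}$, with equal reasoning lengths and equal action lengths, so that $d_z\equiv d$ and $L$ are constants, and $V\equiv1$. I fix $J_+(z,a)=1/4$ for all pairs, so $P_+=(1/2,1/2)$ and $J_+(z\mid a)=1/2$, and I fix $P_-=(1/4,3/4)$. Then $s(a_1)-s(a_2)=\log 3>0$. By Proposition~\ref{prop:had-preference}, and by the first identity of Corollary~\ref{cor:had-full-response-contrast}, the action-preference optimum satisfies $q^*_{\mathrm{act}}(a_1)>q^*_{\mathrm{act}}(a_2)$. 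This fixed data is shared by the two negative laws I compare. The \emph{reference} negative reasoning $J_-(z\mid a)=1/2$ for both actions gives $r_z(a_1)=r_z(a_2)$. By the second identity of the corollary, the full-response optimum then orders $a_1$ above $a_2$ at every prefix, and hence also marginally.

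Next I change only the negative reasoning. I set $J_-(\cdot\mid a_1)=(1/2,1/2)$ and $J_-(\cdot\mid a_2)=(\varepsilon,1-\varepsilon)$ with small $\varepsilon>0$. This keeps $P_-$ and $J_+$ unchanged, and Observation~\ref{obs:had-reasoning-invariance} confirms that the action-only objective is unchanged. At the shared prefix $z_1$ I get $r_{z_1}(a_1)-r_{z_1}(a_2)=\log(2\varepsilon)$. Equation~\ref{eq:had-conditional-action-margins} then gives a conditional margin proportional to $\log 3+\log(2\varepsilon)$, which is negative once $\varepsilon<1/6$. So $Q^*_{\mathrm{full}}(a_2\mid z_1)>Q^*_{\mathrm{full}}(a_1\mid z_1)$, and the ordering is reversed at a shared prefix.

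For the marginal reversal I use Equation~\ref{eq:had-full-probability-optimum}. With constant length $d$ and $k=d/\beta>0$, it gives $Q^*_{\mathrm{full}}(z,a)\propto (J_+(z,a)/J_-(z,a))^{k}$. The four likelihood ratios are $2$ and $2$ for $a_1$, and $1/(3\varepsilon)$ and $1/(3(1-\varepsilon))\le 2/3$ for $a_2$. Therefore $Q^*_{\mathrm{full}}(a_2)\ge c\,(3\varepsilon)^{-k}$ while $Q^*_{\mathrm{full}}(a_1)=2c\cdot 2^{k}$, with the same normalizer $c$. Taking $\varepsilon<\tfrac13\,2^{-1-1/k}$ makes $a_2$ marginally preferred. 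I will choose $\varepsilon$ below both thresholds.

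For the last claim, I score at the prefix $z_s=z_1$ with $f_Q(a)=(\beta/L)\log Q^*_{\mathrm{full}}(a\mid z_1)$. The sign reversal gives $f_Q(a_1)-f_Q(a_2)<0<s(a_1)-s(a_2)$. So $f_Q-s$ is not constant, and by the uniqueness part of Proposition~\ref{prop:had-preference} it is not a minimizer. The excess-risk identity in Equation~\ref{eq:had-excess-risk} makes this quantitative: $\mu(a_1,a_2)>0$ under full support and $\sigma(f_Q(a_1)-f_Q(a_2))\neq\eta(a_1,a_2)$, so the Bernoulli KL term is strictly positive. This yields $\mathcal R(f_Q)>\mathcal R(f^*)$, i.e.\ strictly greater action-preference risk.

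The step needing most care is the marginal reversal. The normalizing constant $\kappa_{\mathrm{full}}$ in Equation~\ref{eq:had-full-probability-optimum} must cancel, which holds because all lengths are equal so a single exponent $k$ applies. The choice of $\varepsilon$ must then work for the given $\beta$ and $d$, which is why it depends explicitly on $k$. Full support of $J_-$ is preserved for $\varepsilon\in(0,1)$, so Corollary~\ref{cor:had-full-response-contrast} applies throughout.
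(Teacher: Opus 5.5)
Your proof is correct and takes the same route as the paper's. Both use an explicit two-action, two-prefix counterexample that compares the reasoning-neutral negative law $P_-(a)J_+(z\mid a)$ with a reweighted negative reasoning law sharing the same $P_-$; the closed-form optimum in Equation~\ref{eq:had-full-probability-optimum} then gives the conditional and marginal reversals, and the excess-risk identity gives the strict risk gap. The only difference is that the paper fixes concrete numbers with $\beta=2$ and one-token spans, so that $Q^*_{\mathrm{full}}\propto J_+/J_-$ exactly, whereas your $\varepsilon$-family with uniform $J_+$ works for any $\beta$ and constant lengths, at the cost of an $\varepsilon$-threshold that depends on $k=d/\beta$.
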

\begin{proof}
Take two one-token actions $a_1,a_2$, two one-token reasoning prefixes $z_0,z_1$, $\beta=2$, and $V\equiv1$, with teacher laws
\begin{equation}
    J_+=\begin{pmatrix}1/3&1/4\\1/3&1/12\end{pmatrix},
    \qquad
    J_-=\begin{pmatrix}1/6&1/24\\1/6&5/8\end{pmatrix}.
    \label{eq:had-full-reversal-teachers}
\end{equation}
Rows index reasoning prefixes and columns index actions.
Both laws have full support, with $P_+=(2/3,1/3)$ and $P_-=(1/3,2/3)$.
Equation~\ref{eq:had-density-optimum}, applied to action scores, gives $q^*_{\mathrm{act}}=(2/3,1/3)$ and score margin $\log4$.
Full responses have length two, so Equation~\ref{eq:had-full-probability-optimum} gives
\begin{equation}
    Q^*_{\mathrm{full}}
    =\begin{pmatrix}15/76&45/76\\15/76&1/76\end{pmatrix},
    \qquad Q^*_{\mathrm{full}}(a_1\mid z_0)=\frac14.
    \label{eq:had-full-action-reversal}
\end{equation}
This favors $a_2$ both at $z_0$ and marginally, since $\sum_zQ^*_{\mathrm{full}}(z,a_1)=15/38<1/2$.
Its action-score margin at $z_0$ is $2\log((1/4)/(3/4))=-\log9$, rather than $\log4$, giving strictly greater action risk by Equation~\ref{eq:had-excess-risk}.
Now replace $J_-$ by $J_-^{(0)}(z,a)=P_-(a)J_+(z\mid a)$, leaving $J_+$ and $P_-$ unchanged.
The joint teacher ratio becomes $P_+(a)/P_-(a)$, independent of reasoning, and
\begin{equation}
    Q^*_{\mathrm{full},0}(a_1\mid z_0)
    =\sum_zQ^*_{\mathrm{full},0}(z,a_1)=\frac45>\frac12.
    \label{eq:had-fixed-supervision-reversal}
\end{equation}
Thus negative reasoning alone reverses both full-response orderings while the conditional HAD objective stays fixed.
\end{proof}
The example holds the positive response law fixed, alongside the action marginals, lengths, and checker acceptance, isolating sensitivity to negative reasoning.
It compares isolated preference optima; it does not establish the same reversal for the full-response variant after adding response distillation, nor does it order the variants by task utility.
More generally, an action-only positive filter leaves $J_+(z\mid a)$ unchanged for accepted actions, so it does not remove their conditional reasoning contrast.
Action-only preferences avoid directly contrasting teacher reasoning, but do not eliminate its role in producing the candidate actions or in the positive response-distillation term.

\subsection{Action-Only Preferences at Student Reasoning Prefixes}
\label{app:had-prefix-supervision}
We now formalize the coupled construction in Section~\ref{sec:had-action-design}: the student's sampled reasoning supplies the context, and teacher actions supply the preference candidates.
The resulting loss supplements imitation of complete teacher responses with supervision of the student's conditional action choice.
All probabilities below are conditional on the fixed harness-induced history.

\begin{definition}[Student-decision preference risk and excess risk]
\label{def:had-prefix-risk}
Let $\mathcal Z$ be a common finite, nonempty reasoning-prefix alphabet.
At every prefix, use the same finite action alphabet $\mathcal A$, positive action lengths $L(a)$, candidate laws $P_\pm$, and checker $V$.
Let $\omega$ be a fixed distribution of collected student prefixes, and draw $z_s\sim\omega$, $(z^+,a^+)\sim J_+$, and $(z^-,a^-)\sim J_-$ independently.
For student action conditionals that are positive laws normalized on $\mathcal A$, define
\begin{align}
    \mathcal R_{V,\omega}(Q)
    &=\mathbb E\!\left[V(a^+)\log\!\left(1+e^{f_{Q,z_s}(a^-)-f_{Q,z_s}(a^+)}\right)\right]\nonumber\\
    &=\sum_z\omega(z)\mathcal R_V(f_{Q,z}),
    \qquad f_{Q,z}(a)=\frac{\beta}{L(a)}\log Q_A(a\mid z).
    \label{eq:had-student-prefix-risk}
\end{align}
For positive action laws $q$ normalized on $\mathcal A$, write $f_q(a)=\beta\log q(a)/L(a)$ and $r_V^\star=\inf_q\mathcal R_V(f_q)$.
Define the per-prefix and prefix-averaged excess risks by
\begin{equation}
    e_V(Q,z)=\mathcal R_V(f_{Q,z})-r_V^\star,
    \qquad
    \mathcal E_{V,\omega}(Q)
    =\sum_z\omega(z)e_V(Q,z)
    =\mathcal R_{V,\omega}(Q)-r_V^\star.
    \label{eq:had-prefix-excess-risk}
\end{equation}
\end{definition}
Using an infimum allows filtered risks whose best separating margins are attained only in a limit.
These excesses concern the isolated action-preference term, not task utility.
The prefix law is held fixed during the comparison, as are collected records during differentiation.

\begin{observation}[Response imitation and student-decision supervision]
\label{obs:had-conditional-supervision}
Under Definition~\ref{def:had-prefix-risk}, let $Q(z,a)=Q_Z(z)Q_A(a\mid z)$ be strictly positive on $\mathcal Z\times\mathcal A$.
Write $w_+(z,a)=J_+(z,a)/D(z,a)$ and $w_{+,Z}(z)=\sum_a w_+(z,a)$.
The response-distillation risk decomposes as
\begin{equation}
    \mathcal D(Q)
    =-\sum_z w_{+,Z}(z)\log Q_Z(z)
     -\sum_{z,a}w_+(z,a)\log Q_A(a\mid z).
    \label{eq:had-response-path-decomposition}
\end{equation}
It therefore scores action conditionals under positive teacher reasoning.
In contrast, for fixed $\omega$, $\mathcal R_{V,\omega}(Q)$ depends on $Q$ only through $Q_A(\cdot\mid z)$ at prefixes with $\omega(z)>0$ and is invariant to changes in $Q_Z$ with these action conditionals held fixed.
For each retained pair, the preference loss is strictly decreasing as a function of its conditional action-score margin $f_{Q,z_s}(a^+)-f_{Q,z_s}(a^-)$.
\end{observation}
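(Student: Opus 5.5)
The plan is to handle the three claims of Observation~\ref{obs:had-conditional-supervision} in turn, since each is a direct manipulation of the definitions in Equations~\ref{eq:had-response-risk} and~\ref{eq:had-student-prefix-risk}.

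\emph{Decomposition of $\mathcal D$.} Strict positivity of $Q$ makes every logarithm finite. Substituting $Q(z,a)=Q_Z(z)Q_A(a\mid z)$ into Equation~\ref{eq:had-response-risk} splits $\log Q(z,a)$ into $\log Q_Z(z)+\log Q_A(a\mid z)$, so that
\begin{equation*}
\mathcal D(Q)=-\sum_{z,a}w_+(z,a)\log Q_Z(z)-\sum_{z,a}w_+(z,a)\log Q_A(a\mid z).
\end{equation*}
In the first sum the factor $\log Q_Z(z)$ does not depend on $a$, so summing over $a$ gives $w_{+,Z}(z)$ and hence Equation~\ref{eq:had-response-path-decomposition}. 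The second sum carries the weight $J_+(z,a)/D(z,a)$, which is supported on the positive teacher's own reasoning prefixes. This shows that action conditionals are scored under positive teacher reasoning. Responses with $J_+(z,a)=0$ contribute zero under the stated convention, so finiteness is never an issue.

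\emph{Dependence of $\mathcal R_{V,\omega}$.} I will use the second line of Equation~\ref{eq:had-student-prefix-risk}, $\mathcal R_{V,\omega}(Q)=\sum_z\omega(z)\mathcal R_V(f_{Q,z})$. Each score $f_{Q,z}(a)=\beta\log Q_A(a\mid z)/L(a)$ is a function of the conditional $Q_A(\cdot\mid z)$ alone. Terms with $\omega(z)=0$ vanish. It follows that $\mathcal R_{V,\omega}$ is a function only of $\{Q_A(\cdot\mid z):\omega(z)>0\}$. Consequently, changing $Q_Z$ while holding these conditionals fixed leaves the risk unchanged; the key point is that $\omega$ is fixed and not induced by $Q_Z$. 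I would also verify the first equality in Equation~\ref{eq:had-student-prefix-risk}, although it is given as a definition. To do so, condition on $z_s$ using independence and marginalize $z^\pm$, which yields the weights $V(a)P_+(a)P_-(b)$ of Definition~\ref{def:had-filtered-objective}.

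\emph{Monotonicity.} For a retained pair we have $V(a^+)=1$, and the loss equals $\log(1+e^{-u})$ with $u=f_{Q,z_s}(a^+)-f_{Q,z_s}(a^-)$. Its derivative is $-\sigma(-u)<0$, exactly as in the proof of Observation~\ref{obs:had-filtering}. No single step is a real obstacle. The only point needing care is the one noted in the second step: the invariance to $Q_Z$ holds because the prefix law $\omega$ is held fixed by assumption, so I will state that assumption explicitly rather than let it read as a property of $Q_Z$.
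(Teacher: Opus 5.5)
Your proposal is correct and follows the paper's proof essentially step for step: you substitute $\log Q=\log Q_Z+\log Q_A$ into the response risk, read off the dependence of $\mathcal R_{V,\omega}$ from its per-prefix form, and differentiate $\log(1+e^{-u})$ to get $-1/(1+e^{u})<0$. Your extra verification of the first equality in the definition, and your explicit remark that $\omega$ is held fixed rather than induced by $Q_Z$, are harmless refinements that the paper leaves implicit.
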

\begin{proof}
Substituting $\log Q(z,a)=\log Q_Z(z)+\log Q_A(a\mid z)$ into Equation~\ref{eq:had-response-risk} gives the decomposition.
Equation~\ref{eq:had-student-prefix-risk} contains only action conditionals at prefixes weighted by $\omega$.
For a retained pair with margin $\delta$, the loss is $\log(1+e^{-\delta})$, whose derivative is $-1/(1+e^{\delta})<0$.
No teacher- or student-reasoning likelihood appears as a separate score in that margin.
\end{proof}
The two scoring choices describe one conditional comparison: $z_s$ is the context of the action law being compared, not another response span receiving a preference label.
This is a property of the collected-data loss, not a claim that preference updates leave the student's reasoning unchanged: neural parameters may be shared, and resampling prefixes would change $\omega$.

\begin{corollary}[Independent prefixes preserve the marginal action target]
\label{cor:had-prefix-independence}
Under Definition~\ref{def:had-prefix-risk}, whenever $\omega(z)>0$, the conditional candidate law is $\Pr(a^+=a,a^-=b\mid z_s=z)=P_+(a)P_-(b)$.
Thus the same per-prefix risk $\mathcal R_V$ is used at every supervised prefix.
If additionally $P_\pm$ have full support, $V\equiv1$, and all actions have length $L$, the isolated optimum over independently variable positive normalized action conditionals is
\begin{equation}
    Q_A^*(a\mid z)
    =\frac{\bigl(P_+(a)/P_-(a)\bigr)^{L/\beta}}
           {\sum_b\bigl(P_+(b)/P_-(b)\bigr)^{L/\beta}},
    \qquad \omega(z)>0.
    \label{eq:had-prefix-invariant-optimum}
\end{equation}
In particular, the isolated target is independent of the scoring prefix.
\end{corollary}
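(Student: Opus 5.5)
The plan has two parts: first compute the conditional candidate law from the independence built into Definition~\ref{def:had-prefix-risk}, then optimize prefix by prefix using Proposition~\ref{prop:had-preference} together with a normalization step.

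\textbf{Conditional candidate law.} In Definition~\ref{def:had-prefix-risk}, the draws $z_s\sim\omega$, $(z^+,a^+)\sim J_+$ and $(z^-,a^-)\sim J_-$ are mutually independent. For $\omega(z)>0$, conditioning on $z_s=z$ therefore does not affect the joint law of the two teacher responses. Marginalizing the teacher reasoning gives $\Pr(a^+=a,a^-=b\mid z_s=z)=\sum_{z',z''}J_+(z',a)J_-(z'',b)=P_+(a)P_-(b)$. Substituting this into the expectation in Equation~\ref{eq:had-student-prefix-risk} yields the per-prefix form $\sum_z\omega(z)\mathcal R_V(f_{Q,z})$. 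The same weights $V(a)P_+(a)P_-(b)$, and hence the same risk $\mathcal R_V$ from Definition~\ref{def:had-filtered-objective}, appear at every supervised prefix.

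\textbf{Isolated optimum.} Now set $V\equiv1$, so that $\mathcal R_V=\mathcal R$. Since the action conditionals at different prefixes vary independently, the weighted sum is minimized exactly when each term with $\omega(z)>0$ is minimized. Each term is minimized over positive normalized laws $q$ on $\mathcal A$, with score $f_q(a)=(\beta/L)\log q(a)$.

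By Proposition~\ref{prop:had-preference}, the unrestricted minimizers over all $f$ are exactly $s+\kappa$. The key step is to show that this family meets the set of achievable scores, i.e. scores coming from normalized laws:
\begin{equation}
\frac{\beta}{L}\log q(a)=s(a)+\kappa
\quad\Longleftrightarrow\quad
q(a)=e^{L\kappa/\beta}\Bigl(\frac{P_+(a)}{P_-(a)}\Bigr)^{L/\beta}.
\end{equation}
Normalization $\sum_a q(a)=1$ fixes exactly one $\kappa$. The resulting $q$ is strictly positive, so it is admissible and attains the unrestricted infimum. It therefore minimizes the restricted problem.

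Uniqueness follows by the same argument in reverse. Any restricted minimizer attains the unrestricted minimum, so its score must equal $s+\kappa'$ for some $\kappa'$. Normalization then forces $\kappa'=\kappa$. This gives Equation~\ref{eq:had-prefix-invariant-optimum}. The right-hand side does not involve $z$, so the target is the same at every scoring prefix.

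\textbf{Main obstacle.} The only delicate point is the passage from the unrestricted minimizers of Proposition~\ref{prop:had-preference} to the normalized-law parameterization. That passage works because of two facts. First, the constant $\kappa$ is free, so normalization can always absorb it. Second, with equal lengths $L$, the map $q\mapsto f_q$ is a bijection between positive normalized laws and scores modulo a constant fixed by normalization. Once this is established, the remaining steps are bookkeeping with independence and the prefix-wise decomposition.
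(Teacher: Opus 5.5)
Your proposal is correct and follows essentially the same route as the paper. Independence of $z_s$ from the two teacher draws gives the product law $P_+(a)P_-(b)$, hence the same $\mathcal R_V$ at every supervised prefix, and the optimum then follows from Proposition~\ref{prop:had-preference} plus normalization; you carry out the normalization and uniqueness step inline, where the paper cites Equation~\ref{eq:had-density-optimum} with constant lengths.
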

\begin{proof}
Independence of the prefix and the two teacher responses gives the product conditional law, so the masked candidate risk at each prefix is $\mathcal R_V$.
For unfiltered comparisons with full support and equal action lengths, Proposition~\ref{prop:had-preference} and Equation~\ref{eq:had-density-optimum} give the stated optimum at every supervised prefix.
\end{proof}
Teacher reasoning generates the candidates but is marginalized in their action law; student reasoning specifies where the same marginal action contrast is taught, not which contrast is taught.
This neither replaces $P_\pm$ by teacher continuation laws conditioned on $z_s$ nor labels the student's reasoning as correct.
Equation~\ref{eq:had-prefix-invariant-optimum} is a relative-preference target, not generally the teacher action law $P_+$.
With variable but prefix-independent action lengths, Equation~\ref{eq:had-density-optimum} likewise yields the same isolated action law at every supervised prefix.
A fixed action-only checker also leaves the per-prefix risk identical, although a filtered infimum may only be approached in a limit.

Importantly, our auxiliary preference objective favors consistency across student reasoning contexts but does not, by itself, preserve useful reasoning-dependent action variation.
This does not imply that the combined HAD optimum is prefix-invariant, as it also includes response distillation and may constrain action conditionals through shared parameters.

\begin{corollary}[Reusing positive teacher reasoning changes the action target]
\label{cor:had-teacher-prefix-target}
Let $(z^+,a^+)\sim J_+$ and $(z^-,a^-)\sim J_-$ be independent teacher responses.
If the positive reasoning $z^+$ is reused as the shared scoring prefix, then, for every $z$ with $J_{+,Z}(z)>0$, the conditional candidate law is $\Pr(a^+=a,a^-=b\mid z^+=z)=J_+(a\mid z)P_-(b)$.
When $J_+(\cdot\mid z)$ and $P_-$ have full support and $V\equiv1$, the optimal unrestricted scores at that prefix are
\begin{equation}
    g_z^*(a)
    =\log\frac{J_+(a\mid z)}{P_-(a)}+\kappa_z
    =s(a)+\log\frac{J_+(z\mid a)}{J_{+,Z}(z)}+\kappa_z.
    \label{eq:had-teacher-prefix-target}
\end{equation}
\end{corollary}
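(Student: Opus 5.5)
The proof has two parts: first identify the conditional law of the candidate pair given the reused prefix, then apply Proposition~\ref{prop:had-preference} at that prefix. For the first part, note that $z^+$ and $a^+$ come from the same draw $(z^+,a^+)\sim J_+$, while $(z^-,a^-)\sim J_-$ is independent of it. For any $z$ with $J_{+,Z}(z)>0$ we compute
\begin{equation*}
\Pr(a^+=a,a^-=b\mid z^+=z)=\frac{\Pr(z^+=z,a^+=a)\Pr(a^-=b)}{\Pr(z^+=z)}=\frac{J_+(z,a)}{J_{+,Z}(z)}\,P_-(b)=J_+(a\mid z)P_-(b).
\end{equation*}
Here $\Pr(a^-=b)=P_-(b)$ because $z^-$ is marginalized. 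This gives the stated product law.

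Next, consider the unmasked per-prefix preference risk with $V\equiv1$ and unrestricted scores $g_z$. Conditionally on $z^+=z$, this is exactly $\mathcal R_{J_+(\cdot\mid z),P_-}(g_z)$ from Definition~\ref{def:had-comparison-risk}. Both laws live on the common finite action set $\mathcal A$, and both have full support by assumption. The positive and negative actions are independent conditionally on the prefix, by the product law just derived. So Proposition~\ref{prop:had-preference} applies verbatim with $P_+$ replaced by $J_+(\cdot\mid z)$. It shows that the minimizers are exactly $g_z^*(a)=\log\bigl(J_+(a\mid z)/P_-(a)\bigr)+\kappa_z$, with an arbitrary constant $\kappa_z$ for each prefix.

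The second equality in the display is Bayes' rule. Write $J_+(a\mid z)=P_+(a)J_+(z\mid a)/J_{+,Z}(z)$; this is valid because $P_+(a)>0$, which follows from full support of $J_+(\cdot\mid z)$ at a prefix of positive mass. Substituting gives $\log(P_+(a)/P_-(a))+\log\bigl(J_+(z\mid a)/J_{+,Z}(z)\bigr)=s(a)+\log\bigl(J_+(z\mid a)/J_{+,Z}(z)\bigr)$.

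The only point needing care is justifying that the conditional risk decomposes per prefix. I would argue this the same way as in Corollary~\ref{cor:had-prefix-independence}: the total risk is $\sum_z J_{+,Z}(z)\,\mathcal R_{J_+(\cdot\mid z),P_-}(g_z)$, and the scores at different prefixes are independently variable. Unrestricted minimization therefore separates into one minimization per prefix with $J_{+,Z}(z)>0$. I would also remark that the term $\log(J_+(z\mid a)/J_{+,Z}(z))$ is generally not constant in $a$, so the target differs from the student-prefix target $s$ of Corollary~\ref{cor:had-prefix-independence}.
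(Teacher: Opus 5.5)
Your proposal is correct and follows the same route as the paper. You derive the conditional candidate law $J_+(\cdot\mid z)P_-$ from the independence of the two teacher draws, apply Proposition~\ref{prop:had-preference} with $P_+$ replaced by $J_+(\cdot\mid z)$, and get the second form by Bayes' rule. Your extra checks only make explicit what the paper's short proof leaves implicit: that $P_+(a)\geq J_+(z,a)>0$ makes the Bayes factorization well defined, and that unrestricted minimization separates across prefixes.
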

\begin{proof}
Conditioning the positive draw on its reasoning changes its action law to $J_+(\cdot\mid z)$, while independence leaves the negative action law equal to $P_-$.
Proposition~\ref{prop:had-preference} gives the first score identity, and Bayes' rule gives the second.
\end{proof}
This target adds compatibility with the selected positive reasoning trace to the marginal action contrast.
An independently sampled student prefix avoids this extra conditioning; choosing the student rather than another independent prefix law additionally determines the weighting analyzed next.
Full-response preferences are not simply Equation~\ref{eq:had-student-prefix-risk} under a different prefix law: they also change the scored span and couple each candidate to its own teacher reasoning.

\paragraph{Which prefixes receive supervision?}
We now hold the candidate laws $P_\pm$ and checker fixed, changing only the distribution of independently sampled scoring prefixes.
Definition~\ref{def:had-prefix-risk} supplies the common per-prefix excess risk used to compare these distributions.

\begin{proposition}[Prefix coverage and excess-risk transfer]
\label{prop:had-prefix-coverage}
Let $\tau$ be another fixed prefix distribution used with the same action comparison, and define
\begin{equation}
    C_{\omega\mid\tau}
    =\max_{z:\omega(z)>0}\frac{\omega(z)}{\tau(z)},
    \label{eq:had-prefix-coverage-factor}
\end{equation}
with the ratio interpreted as $+\infty$ when $\tau(z)=0$.
For finite $C_{\omega\mid\tau}$ and every admissible student $Q$,
\begin{equation}
    \mathcal E_{V,\omega}(Q)
    \leq C_{\omega\mid\tau}\,\mathcal E_{V,\tau}(Q).
    \label{eq:had-prefix-transfer-bound}
\end{equation}
If $\mathcal R_V(f_q)$ is nonconstant and positive action conditionals can vary independently, this is the smallest uniform factor; when $C_{\omega\mid\tau}=+\infty$, no finite factor suffices.
\end{proposition}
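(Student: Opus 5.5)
The bound follows from the pointwise nonnegativity of the per-prefix excess $e_V(Q,z)$ in Definition~\ref{def:had-prefix-risk}. Since $r_V^\star$ is an infimum over the same class of positive normalized action laws that $Q_A(\cdot\mid z)$ belongs to, each $e_V(Q,z)\ge 0$. Then
\[
\mathcal E_{V,\omega}(Q)=\sum_{z:\omega(z)>0}\omega(z)e_V(Q,z)
\le C_{\omega\mid\tau}\sum_{z:\omega(z)>0}\tau(z)e_V(Q,z)
\le C_{\omega\mid\tau}\,\mathcal E_{V,\tau}(Q).
\]
The first inequality uses $\omega(z)\le C_{\omega\mid\tau}\tau(z)$ on the support of $\omega$; finiteness of $C_{\omega\mid\tau}$ forces $\tau(z)>0$ there. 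The second adds back the nonnegative terms with $\omega(z)=0$. This step is routine. I will also note that both excesses are finite, since $\mathcal R_V(f_q)$ is finite for positive $q$, so no $\infty-\infty$ issue arises.

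For sharpness, I will show that no factor $c<C_{\omega\mid\tau}$ works uniformly. Pick $z^\dagger$ attaining the maximum in the definition of $C_{\omega\mid\tau}$. Because $q\mapsto\mathcal R_V(f_q)$ is nonconstant, some positive law $q_1$ has $\mathcal R_V(f_{q_1})>r_V^\star$. Choose $q_0$ with $e:=\mathcal R_V(f_{q_1})-r_V^\star$ much larger than $\epsilon:=\mathcal R_V(f_{q_0})-r_V^\star\ge 0$, which is possible by the definition of the infimum. Using independent variability of the action conditionals, set $Q_A(\cdot\mid z^\dagger)=q_1$ and $Q_A(\cdot\mid z)=q_0$ for every other $z$. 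This gives
\[
\mathcal E_{V,\omega}=\omega(z^\dagger)e+(1-\omega(z^\dagger))\epsilon,
\qquad
\mathcal E_{V,\tau}=\tau(z^\dagger)e+(1-\tau(z^\dagger))\epsilon.
\]
As $\epsilon/e\to0$, the ratio tends to $\omega(z^\dagger)/\tau(z^\dagger)=C_{\omega\mid\tau}$, so no smaller factor holds for all $Q$.

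The main obstacle is the case where the infimum $r_V^\star$ is not attained, which can happen under filtering. To get $\epsilon/e\to 0$, I will either fix $q_1$ and send $\epsilon\to0$ along a minimizing sequence, or, if $\mathcal R_V$ is constant near its infimum, note that nonconstancy still gives some $q_1$ with $e>0$ against the limiting value.

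For $C_{\omega\mid\tau}=+\infty$, take $z^\dagger$ with $\omega(z^\dagger)>0$ and $\tau(z^\dagger)=0$. The same construction gives $\mathcal E_{V,\tau}=\epsilon$ while $\mathcal E_{V,\omega}\ge\omega(z^\dagger)e>0$. Letting $\epsilon\to0$ rules out every finite factor. If $\epsilon=0$ is attainable, $\mathcal E_{V,\tau}=0<\mathcal E_{V,\omega}$ already rules out every finite factor directly.
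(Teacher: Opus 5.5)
Your proposal is correct and follows essentially the same route as the paper: the bound comes from nonnegativity of the per-prefix excess together with $\omega(z)\le C_{\omega\mid\tau}\tau(z)$, and sharpness comes from putting a law with strictly positive excess at a prefix attaining the maximal ratio while using near-minimizing laws for $r_V^\star$ at every other prefix. The non-attainment issue you flag is already handled by your first option (fix $q_1$ and let $\epsilon\to0$ along a minimizing sequence), which is exactly how the paper's proof proceeds.
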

\begin{proof}
Every $e_V(Q,z)$ is nonnegative by definition of $r_V^\star$.
For finite $C_{\omega\mid\tau}$, multiplying $\omega(z)\leq C_{\omega\mid\tau}\tau(z)$ by these excesses and summing proves the bound.
For sharpness, choose a prefix $z_0$ attaining the largest ratio and an action law with strictly positive excess there, which exists because the risk is nonconstant.
At every other prefix, independently choose action laws whose risks approach $r_V^\star$.
The ratio of averaged excesses then approaches $\omega(z_0)/\tau(z_0)$.
If $\tau(z_0)=0<\omega(z_0)$, the $\tau$-averaged excess can approach zero while the $\omega$-averaged excess remains bounded away from zero, ruling out every finite factor.
\end{proof}
The bound applies to parameterized students too; only its sharpness uses independently variable conditionals.
It addresses unequal coverage even when both prefix laws have full support.
For example, with two prefixes, $\tau=(1-\varepsilon,\varepsilon)$, $\omega=(1/2,1/2)$, $0<\varepsilon<1/2$, and excesses $(0,\delta)$ with $\delta>0$, the two averaged excesses are $\varepsilon\delta$ and $\delta/2$.
Such excesses are realizable in the unfiltered full-support setting, where the isolated minimum is attained.
With full prefix coverage and unrestricted conditionals, alternative prefix weights share the same pointwise target and risk infimum.
Their practical difference concerns residual errors due to finite data, limited model capacity, or imperfect optimization, rather than a better pointwise population target.
Student-prefix sampling avoids this transfer step by using the collected prefix law $\omega$ itself; it does not guarantee small excess risk or improved task return.

\paragraph{Uncovered prefixes in teacher-forced losses.}
The decomposition in Observation~\ref{obs:had-conditional-supervision} also makes the zero-coverage case explicit for the actual response losses.
If both teacher reasoning marginals vanish at $z_s$, varying $Q_A(\cdot\mid z_s)$ at fixed $Q_Z$ and other action conditionals leaves both $\mathcal D$ and $\mathcal R_{\mathrm{full}}$ unchanged: no positive-weight term scores a response there.
Filtering positive actions cannot create this missing coverage.
If $\omega(z_s)>0$ and the per-prefix action risk is nonconstant, such a variation can nevertheless change $\mathcal R_{V,\omega}$ through Equation~\ref{eq:had-student-prefix-risk}.
This concerns directly scored conditionals, not the absence of generalization through shared neural parameters.

\paragraph{What the positive-only control retains.}
The loss in Equation~\ref{eq:had-positive-only-loss} retains student-prefix supervision but removes the negative action contrast.
At the fixed history, write $V(a)=V(a\mid\tilde x^\theta)$ for the binary checker and let $q$ be a normalized action law on the finite teacher action set.
Its isolated risk is $-\sum_aV(a)P_+(a)\log q(a)/L(a)$, whose optimum is
\begin{equation}
    q_{\mathrm{pos}}^*(a)
    =\frac{V(a)P_+(a)/L(a)}{\sum_bV(b)P_+(b)/L(b)}
    \label{eq:had-positive-only-optimum}
\end{equation}
when the denominator is positive; if it vanishes, the risk is identically zero.
The weighted-imitation argument in Section~\ref{app:had-score-probabilities} proves the formula, allowing zero probabilities on zero-weight actions.
Thus this control retains direct prefix supervision without learning the $P_+/P_-$ contrast.
It does not by itself isolate the effect of student rather than teacher reasoning conditioning.

\subsection{From Scores to Probabilities}
\label{app:had-score-probabilities}
This section supplies the probability realizations used in the preceding comparisons.
The score characterizations in Proposition~\ref{prop:had-preference} and Equation~\ref{eq:had-reasoning-decomposition} apply to mean token log-likelihoods, not directly to sequence probabilities.
More generally, suppose a risk on a finite, nonempty alphabet $\mathcal S$ has exactly the optimal scores $g+\kappa$.
For positive lengths $\ell(\xi)$ and scores $\beta\log p(\xi)/\ell(\xi)$, its unique optimum over positive probability laws normalized on $\mathcal S$ is
\begin{equation}
    p^*(\xi)=\exp\!\left[\frac{\ell(\xi)}{\beta}\bigl(g(\xi)+\kappa\bigr)\right],
    \qquad \sum_{\xi\in\mathcal S}p^*(\xi)=1.
    \label{eq:had-density-optimum}
\end{equation}
The normalization sum is continuous, strictly increasing in $\kappa$, and ranges from zero to infinity, so exactly one constant realizes the optimal score differences.
For action preferences, take $\mathcal S=\mathcal A$, $g=s$, and $\ell=L$; for full-response preferences, take $\mathcal S=\mathcal Y$, $g=\log(J_+/J_-)$, and $\ell=D$.
When $\ell$ is constant, $p^*\propto\exp(\ell g/\beta)$, preserving the score ordering; with variable lengths, probability and score orderings can differ.
Without normalization on the specified alphabet, score calibration alone does not identify probability mass outside it.
These are isolated preference-loss optima, not optima of the combined HAD objective.
For comparison, minimizing a weighted imitation loss $-\sum_\xi w(\xi)\log p(\xi)$ gives $p^*(\xi)=w(\xi)/\sum_{\xi'}w(\xi')$ whenever $w\geq0$ has positive total mass, allowing boundary probabilities.
This follows by expressing the loss as that total mass times cross-entropy from the normalized weights.

\section{Experiment settings}
\label{app:experimental_settings}

\subsection{Models and tasks}
\label{app:env}

\Cref{tab:app-env} summarizes the environments and models.
All models are public checkpoints served with thinking disabled, and the Gemma pair uses the same prompts, harness, and training recipe as the Qwen3 pair on ALFWorld.
Every environment asks the model for exactly two lines, \texttt{Thought: <brief reasoning>} followed by \texttt{Action: <one command>}.
A reply without a parsable action consumes a turn.
Training rollouts use temperature 1.0, and evaluations use temperature 0.4.

\begin{table}[ht]
\centering
\caption{Environments, models, and evaluation settings.}
\label{tab:app-env}
\small
\begin{tabular}{lccc}
\toprule
 & ALFWorld & WebShop & ScienceWorld \\
\midrule
Teacher & Qwen3-8B & Qwen3-30B-A3B & Qwen3-30B-A3B \\
Student & Qwen3-1.7B & Qwen3-0.6B & Qwen3-0.6B \\
Teacher (other family) & Gemma-4-12B & -- & -- \\
Student (other family) & Gemma-4-E2B & -- & -- \\
Test set & 140 seen, 134 unseen & 500 goals & 149 tasks (30 types) \\
Training pool & 3{,}553 tasks &  11{,}587 goals &  3{,}592 tasks \\
Step limit & 30 & 15 & 15 \\
Max reply tokens & 2{,}048 & 512 & 128 \\
Context length & 24{,}576 & 24{,}576 & 16{,}384 \\
History in the prompt & full & last 2 steps & full \\
Metrics & SR & score, SR & score, SR \\
\bottomrule
\end{tabular}
\end{table}

\paragraph{ALFWorld.}
Each user turn contains the task, the observation, and the list of admissible commands, and the full history is kept as alternating turns.
Harness lines are inserted only in the latest turn, between the observation and the admissible commands.
The task line is the first human annotation of each trial rather than the templated goal, as in the original release.
This annotation differs from the templated goal in 10 of the 274 evaluation tasks, all methods see the same text, and these tasks are not used for the qualitative examples.

\paragraph{WebShop.}
We use the 1{,}000-product version of the environment.
The score is the mean final reward multiplied by 100, and the success rate is the percentage of episodes with reward 1.0.

\paragraph{ScienceWorld.}
We use version 1.2.3 with the teleport simplification and 149 test variations over 30 task types.
The score is the mean final score, which can be negative, and the success rate is the percentage of episodes with a score of at least 100.

\subsection{Harness}
\label{app:harness}

\paragraph{Design.}
All three environments share the same harness code and templates, and only the part that reads the environment state differs.
The harness contains no hints about the task, such as subgoals or solution steps, and reports only what the environment and the agent's own history already reveal.
It only adds text to the observation and never selects or blocks an action.
Some environments already display part of this information in their own observations, such as the list of admissible commands in ALFWorld and WebShop and the interaction history in the prompt.
Since the agent sees this information regardless of the harness, we keep it in every condition, including evaluation without the harness and with a harness category hidden, and the remaining components are added only when the harness is used.

\paragraph{Lines.}
\Cref{tab:app-harness} lists the lines the harness can add at each step.
Each line has three paraphrases, and one of them is chosen at random for each firing with a seed that depends on the episode and step.
The same templates are used in all environments.

\begin{table}[ht]
\centering
\caption{Harness lines. One of the three paraphrases is shown for each line.}
\label{tab:app-harness}
\footnotesize
\begin{tabularx}{\linewidth}{l l X X}
\toprule
Line & Function & Template (one paraphrase) & When it fires \\
\midrule
Situation & State tracking & \texttt{You are at \{location\}. You are holding \{holdings\}. \{m\} of \{n\} places visited; \{k\} unvisited (\{names\}).} & Every step \\
Visit & State tracking & \texttt{This visit to \{location\}: arrived turn \{a\}, \{c\} commands executed since.} and, for a revisit, \texttt{Previous visit: arrived turn \{a\}, \{c\} commands executed, left turn \{l\}.} & Every step with a known location \\
Budget & State tracking & \texttt{Step \{t\} of \{cap\}.} and the warning \texttt{Only \{left\} steps remain. Commit to finishing the task now.} & Counter at every step, warning when fewer than $\theta$ steps remain \\
Failure & Behavior monitoring & \texttt{'\{action\}' did not work here (this was attempt \{N\}). It will keep failing at this state - choose a different action from the displayed list.} & The same action was refused or had no effect at the same state at least the minimum number of times \\
Stall & Behavior monitoring & \texttt{Nothing new has been observed for \{k\} steps. You are not making progress.} & No new state was reached in the last $k$ steps \\
\bottomrule
\end{tabularx}
\end{table}
\paragraph{Thresholds.}
The thresholds are part of the harness design, which is shared by all methods, including the baselines, for both training and evaluation (\cref{tab:app-thresholds}).
They are fixed once for each environment before any distillation, using a validation split held out from the training tasks, and no evaluation task is used.
The stall window $k$ is the smallest value, and the budget threshold $\theta$ the largest value, for which the corresponding line fires on at most 5\% of the steps within successful validation episodes of an untrained model, so that the harness rarely interrupts an agent that is making progress.
The minimum number of failed attempts is chosen with the same criterion.
WebShop does not announce refused actions, so its failure line fires from the first attempt without effect.
This calibration belongs to the harness and not to HAD, and the training objective of HAD uses no task rewards or success labels.

\begin{table}[ht]
\centering
\caption{Harness thresholds for each environment. They are part of the harness shared by all methods and are fixed before distillation on a validation split held out from the training tasks.}
\label{tab:app-thresholds}
\small
\begin{tabular}{lccc}
\toprule
 & ALFWorld & WebShop & ScienceWorld \\
\midrule
Stall window $k$ & 5 & 4 & 8 \\
Budget threshold $\theta$ & 11 & 5 & 11 \\
Minimum failed attempts & 2 & 1 & 2 \\
\bottomrule
\end{tabular}
\end{table}


\paragraph{Example pages.}
\Cref{fig:app-pages} shows one harness-equipped page from each environment, with long parts shortened.

\begin{figure}[ht]
\centering
\footnotesize
\begin{minipage}{\linewidth}
\textbf{ALFWorld} (history omitted)
\begin{verbatim}
Task: Put two remotes on an ottoman.
Observation: Nothing happens.

You are at sofa 1. You are holding remotecontrol 1. 12 of 14 places
visited; 2 unvisited (garbagecan 1, ottoman 1).
This visit to sofa 1: arrived turn 18, 2 commands executed since.
Here, 'take remotecontrol 2 from sofa 1' does nothing (this was
attempt 2). This state will not accept it - select another action
from the displayed list.
Step 21 of 30.
Only 9 steps remain. Commit to finishing the task now.

Admissible commands:
examine remotecontrol 1 / examine sofa 1 / go to armchair 1 / ...
\end{verbatim}
\textbf{WebShop}
\begin{verbatim}
Observation: [item page: The Spicy Beef Backpacking Bundle ...
Price: $14.49 ... [button] Buy Now]

You are at item B0978Q1KK9 / page. You are holding nothing. 3 of 14
places visited; 11 unvisited (home, b0978q1kk9, ..., +6 more).
Not yet chosen here: flavor name.
This visit to item B0978Q1KK9 / page: arrived turn 6, 1 commands
executed since.
Previous visit: arrived turn 3, 1 commands executed, left turn 4.
Nothing new has been observed for 4 steps. You are not making
progress.
Step 7 of 15.

Admissible commands: click[back to search] / click[< prev] / ...
\end{verbatim}
\textbf{ScienceWorld}
\begin{verbatim}
Action: move baby wolf to yellow box in living room
Observation: No known action matches that input.

Location: outside. In hand: orange. 2 of 4 places visited;
2 unvisited (greenhouse, kitchen).
This visit to outside: arrived turn 0, 5 commands executed since.
Here, 'move baby wolf to yellow box in living room' does nothing
(this was attempt 2). This state will not accept it - select
another action from the displayed list.
Step 5 of 15.
\end{verbatim}
\end{minipage}
\caption{Example pages that the zero-shot teacher sees with the harness. Long parts of the pages are shortened.}
\label{fig:app-pages}
\end{figure}

\subsection{Harness utilization}
\label{app:hu}

Harness utilization counts only the components whose report implies a rule that holds for every task.
The visit component and the visited places in the situation component are not included, since the right response to them depends on the task, and neither is the budget counter, which only states the current step.
Each of the five remaining components is scored separately, and harness utilization is their unweighted mean (\cref{eq:hu}), so that components applying to every step do not outweigh those that fire rarely.

For each component $c \in \mathcal{C}$ with a task-agnostic rule $r_c(s) \in \{0,1\}$, harness utilization averages the fraction of the steps $S_c$ where the component applies at which the rule holds,
\begin{equation}
    \mathrm{HU} = \frac{1}{|\mathcal{C}|}\sum_{c\in\mathcal{C}} \frac{1}{|S_c|}\sum_{s\in S_c} r_c(s).
    \label{eq:hu}
\end{equation}
The rules require the next action to be available, to use only held objects, to avoid purely informational actions after a budget warning, to differ from an action reported as failed, and to reach a new state within $k$ steps after a stall report.

A step $s$ is counted for a component if the page shown before action $s{+}1$ carries that component and its rule can be evaluated.
Seen and unseen ALFWorld tasks are pooled.
An action outside the ALFWorld command grammar is counted as inconsistent for available actions and is not counted for the other components, so that each error is counted once.

\paragraph{Rules.}
Available actions applies to every step, and the action is consistent if it is in the admissible commands shown at that step.
The situation component applies to every step, and the action is inconsistent if it places, heats, cools, or cleans an object the agent is not holding, slices with a tool the agent is not holding, or takes an object while already holding one.
The budget component applies to steps with a budget warning, and the action is inconsistent if it is \texttt{look}, \texttt{inventory}, \texttt{examine}, or \texttt{help}.
Failure applies to steps with a failure report, and the action is inconsistent if it equals the reported action.
Stall applies to steps with a stall report, and the step is consistent if a state not seen earlier in the episode is reached within the next $k=5$ steps or the episode ends with success.
States are identified by the same key the harness uses, namely the location, the held objects, and the admissible commands.

\begin{table}[ht]
\centering
\caption{Harness utilization and the rate of each component on ALFWorld with the harness.}
\label{tab:app-hu}
\small
\begin{tabular}{lcccccc}
\toprule
 & Available actions & Situation & Budget & Failure & Stall & HU \\
\midrule
Teacher & 90.1 & 97.2 & 87.8 & 79.4 & 16.9 & 74.3 \\
Student & 73.7 & 96.2 & 79.1 & 61.2 & 18.1 & 65.7 \\
\midrule
OPD & 87.8 & 97.0 & 88.4 & 52.9 & 17.6 & 68.8 \\
SAGE-OPD & 88.0 & 97.9 & 83.7 & 52.3 & 11.0 & 66.6 \\
Guided-OPD & 82.5 & 94.1 & 85.7 & 28.6 & 5.3 & 59.2 \\
SOPD & 84.8 & 96.5 & 92.3 & 67.3 & 24.7 & 73.1 \\
\textbf{HAD} & \textbf{93.3} & \textbf{99.0} & 87.7 & \textbf{91.1} & \textbf{33.8} & \textbf{81.0} \\
\bottomrule
\end{tabular}
\end{table}

\subsection{Implementation details}
\label{app:implementation_details}

\paragraph{Common implementation.}
All baselines and HAD run in one synchronous on-policy loop.
In each round, the current student rolls out 16 tasks at temperature 1.0 with the harness on the page, and the teacher reads the same harness-equipped page.
Collected steps are kept for two policy versions and each is used once.
All methods use full fine-tuning with the settings in \cref{tab:app-hparams}.
For methods based on OPD, the KL divergence is computed over the teacher's top 64 tokens at each position instead of the full vocabulary.
The original papers use a learning rate of $10^{-6}$, which did not train the students in our setting, so we use $10^{-5}$ for all methods.

\paragraph{Teacher sampling, truncation, and validity check.}
Following prior on-policy distillation work for multi-turn agents~\citep{li2026guidedopd,sopd2026}, student rollouts are sampled at temperature 1.0 and all evaluations use temperature 0.4.
The teacher samples its responses with and without the harness independently at the same temperature 1.0, so that the two teacher actions are draws from the action laws $P_+$ and $P_-$ analyzed in \cref{app:had-theory}.
Since both teacher responses are sampled, the rates in \cref{app:disagreement} at which the two teacher actions differ include differences due to sampling as well as those due to the harness.
For the methods based on OPD, the KL divergence is computed over the teacher's top 64 tokens at each position, with both distributions renormalized over these tokens.
The validity checker of HAD applies the same checks in all three environments, rejecting a positive action that is not among the admissible commands, that uses an object the agent does not hold, or that repeats an action reported as having had no effect at the same state.

\paragraph{Baseline implementations.} For OPD, the student is trained with the token-level reverse KL divergence to the teacher on its own sampled responses, the on-policy case of GKD~\citep{agarwal2024gkd}. For SAGE-OPD, following the original paper~\citep{sageopd2026}, each student turn is weighted by the teacher's judgment of whether it needs no, weak, or strong correction, multiplied by the teacher's confidence over the turn, and the weights are normalized in each update. For Guided-OPD, following the original paper~\citep{li2026guidedopd}, the teacher takes over each turn with a probability that decays from 1 to 0 with a cosine schedule over the first 80\% of updates, and student turns are trained with the OPD loss.
Teacher turns are trained with cross-entropy on the teacher's decoded tokens instead of the forward KL divergence of the original paper. Finally, for SOPD, following the original paper~\citep{sopd2026}, the teacher writes one full step at every state the student visits, and the student is trained with cross-entropy on these steps, averaged over the tokens of each step and then over steps.

\begin{table}[ht]
\centering
\caption{Training hyperparameters, shared by all methods unless noted.}
\label{tab:app-hparams}
\small
\begin{tabular}{lccc}
\toprule
 & ALFWorld & WebShop & ScienceWorld \\
\midrule
Optimizer & \multicolumn{3}{c}{AdamW, weight decay 0.01, gradient clipping 1.0} \\
Learning rate & $10^{-5}$ & $10^{-5}$ & $10^{-5}$ \\
Schedule & \multicolumn{3}{c}{cosine to 0, 15 warmup updates} \\
Effective batch size & 60 & 60 & 60 \\
Updates & 250 & 150 & 150 \\
Tasks per round & 16 & 16 & 16 \\
Staleness & 2 & 2 & 2 \\
Rollout temperature & 1.0 & 1.0 & 1.0 \\
Evaluation temperature & 0.4 & 0.4 & 0.4 \\
\midrule
HAD $\beta$ & 0.5 & 0.5 & 0.5 \\
HAD $\rho$ & 0.5 & 0.5 & 0.5 \\
\bottomrule
\end{tabular}
\end{table}

\section{Prompts and Examples}
\label{app:prompts}

We show the prompts and replies of each method for one success case and one failure case on ALFWorld.
Each prompt is shown as recorded, with the role markers \texttt{<system>}, \texttt{<user>}, and \texttt{<assistant>}, and the server adds the chat template tokens of each model.
The instructions are the same in all prompts, so each figure replaces them with \texttt{[... instructions elided ...]}, and \cref{fig:app-instructions} shows them in full.
Each figure keeps only the last three turns and shows the admissible commands only in the last turn.
The harness lines appear in the last user turn, between the observation and the admissible commands.
The task line is the first human annotation of the trial, while the templated goal appears once in the first observation of each episode.

\begin{figure}[t]
\begin{plist}[promptstyle]{Instructions shared by all prompts}
<system>You are an agent acting in a text-based ALFWorld household environment. Your goal is to complete the household task by choosing one environment action at a time.

On every turn, respond with exactly two lines and nothing else:

Thought: <brief reasoning about the next step>
Action: <one command>

Choose the Action from the admissible commands provided for the current state. Copy the command exactly, including object and receptacle names and their numeric identifiers. Issue exactly one action.

If the environment returns "Nothing happens.", the previous action had no effect.

Use the current observation and the interaction history to decide what to do next. Do not add any text before the Thought line or after the Action line.
\end{plist}
\caption{Instructions elided from the per-case figures. They are identical across all ALFWorld cases.}
\label{fig:app-instructions}
\end{figure}

\begin{figure}[p]
\begin{minipage}[t]{0.55\linewidth}\vspace{0pt}
\begin{plist}[promptstyle]{Full prompt (HAD, turn 22)}
<system>[... instructions elided ...]

[... earlier turns elided ...]
<user>Task: Put two remotes on an ottoman.
Observation: On the diningtable 1, you see a book 1, a bowl 3, a bowl 2, a bowl 1, a newspaper 1, a plate 1, a tissuebox 1, and a vase 2.

[... admissible commands elided ...]
<assistant>Thought: There are no remotes on the dining table. I should check the remaining unvisited locations like the garbagecan or ottoman.
Action: go to ottoman 1
<user>Task: Put two remotes on an ottoman.
Observation: On the ottoman 1, you see a creditcard 1.

[... admissible commands elided ...]
<assistant>Thought: I need to place the first remote on the ottoman. I should put it there now.
Action: put remotecontrol 1 in/on ottoman 1
<user>Task: Put two remotes on an ottoman.
Observation: You put the remotecontrol 1 in/on the ottoman 1.

[... admissible commands elided ...]
<assistant>Thought: I need to find the second remote control. I should check the remaining unvisited locations like the garbagecan or sofa.
Action: go to garbagecan 1
<user>Task: Put two remotes on an ottoman.
Observation: On the garbagecan 1, you see nothing.

You are at garbagecan 1. You are holding nothing. 13 of 14 places visited; 1 unvisited (sofa 1).
This visit to garbagecan 1: arrived turn 20, 0 commands executed since.
Step 21 of 30.
You have 9 steps remaining. Commit to completing the task now.

Admissible commands:
examine garbagecan 1
go to armchair 1
go to coffeetable 1
go to diningtable 1
go to drawer 1
go to drawer 2
go to drawer 3
go to drawer 4
go to drawer 5
go to drawer 6
go to drawer 7
go to ottoman 1
go to sidetable 1
go to sofa 1
help
inventory
look
\end{plist}
\end{minipage}\hfill
\begin{minipage}[t]{0.43\linewidth}\vspace{0pt}
\begin{plist}[envstyle]{Templated goal}
find two remotecontrol and put them in ottoman.
\end{plist}
\begin{plist}[hadstyle]{HAD, turns 22 and 23 (succeeds at turn 25)}
Thought: I need to check the sofa next as it is the last unvisited location. I should go to the sofa to find the second remote.
Action: go to sofa 1
-> You arrive at sofa 1. On the sofa 1, you see a creditcard 3, a keychain 2, a laptop 3, a laptop 2, a newspaper 2, and a remotecontrol 2.
Thought: I found the second remote control on the sofa. I should take it and place it on the ottoman.
Action: take remotecontrol 2 from sofa 1
-> You pick up the remotecontrol 2 from the sofa 1.
\end{plist}
\begin{plist}[basestyle]{OPD, turns 26 and 27 (fails at turn 30)}
Thought: I have checked all locations and found no remote controls. I should check the ottoman again to place the first remote control.
Action: go to ottoman 1
-> You arrive at ottoman 1. On the ottoman 1, you see a creditcard 1, and a remotecontrol 2.
Thought: I need to place the first remote control on the ottoman. I should put remotecontrol 1 in the ottoman.
Action: put remotecontrol 1 in/on ottoman 1
-> Nothing happens.
\end{plist}
\begin{plist}[basestyle]{SOPD, turn 27 (fails at turn 30)}
Thought: I have already placed one remote on the ottoman. I need to find the second remote and place it there.
Action: go to sidetable 1
-> You arrive at sidetable 1. On the sidetable 1, you see a book 2, a houseplant 1, and a watch 1.
\end{plist}
\end{minipage}
\caption{ALFWorld success case, seen task \texttt{trial\_T20190906\_182952\_858723}. HAD follows the harness report that the sofa is the only unvisited place and finds the second remote. The baselines had passed the sofa earlier without taking the remote, and their replies are shown at the corresponding point near the end of their episodes. OPD tries to place a remote it is not holding, and SOPD returns to a place it has already searched.}
\label{fig:app-case1}
\end{figure}

\begin{figure}[p]
\begin{minipage}[t]{0.55\linewidth}\vspace{0pt}
\begin{plist}[promptstyle]{Full prompt (HAD, turn 28)}
<system>[... instructions elided ...]

[... earlier turns elided ...]
<user>Task: Move a heated mug to the base of the coffee maker.
Observation: On the coffeemachine 1, you see a mug 1.

[... admissible commands elided ...]
<assistant>Thought: The task is complete as the mug is now in the coffee machine.
Action: inventory
<user>Task: Move a heated mug to the base of the coffee maker.
Observation: You are not carrying anything.

[... admissible commands elided ...]
<assistant>Thought: I need to confirm that the mug is now in the coffee machine and that the task is complete.
Action: examine coffeemachine 1
<user>Task: Move a heated mug to the base of the coffee maker.
Observation: On the coffeemachine 1, you see a mug 1.

[... admissible commands elided ...]
<assistant>Thought: The task is complete as the mug is now in the coffee machine.
Action: go to countertop 1
<user>Task: Move a heated mug to the base of the coffee maker.
Observation: Nothing happens.

Location: coffeemachine 1. In hand: nothing. 12 of 25 places visited; 13 unvisited (countertop 2, countertop 3, garbagecan 1, microwave 1, shelf 1, +8 more).
This visit to coffeemachine 1: arrived turn 20, 5 commands executed since.
Nothing new has been observed for 5 steps. You are not making progress.
Step 27 of 30.
You have 3 steps remaining. Commit to completing the task now.

Admissible commands:
examine coffeemachine 1
examine countertop 1
go to cabinet 1
go to cabinet 2
go to cabinet 3
go to cabinet 4
go to cabinet 5
go to cabinet 6
go to countertop 2
go to countertop 3
go to drawer 1
go to drawer 2
go to drawer 3
go to fridge 1
go to garbagecan 1
go to microwave 1
go to shelf 1
go to shelf 2
go to shelf 3
go to sinkbasin 1
go to stoveburner 1
go to stoveburner 2
go to stoveburner 3
go to stoveburner 4
go to toaster 1
help
inventory
look
take mug 1 from coffeemachine 1
take pan 1 from countertop 1
take spoon 1 from countertop 1
\end{plist}
\end{minipage}\hfill
\begin{minipage}[t]{0.43\linewidth}\vspace{0pt}
\begin{plist}[envstyle]{Templated goal}
put a hot mug in coffeemachine.
\end{plist}
\begin{plist}[basestyle]{HAD, turn 28 (fails at turn 30)}
Thought: I need to confirm that the mug is now in the coffee machine and that the task is complete.
Action: inventory
-> You are not carrying anything.
\end{plist}
\begin{plist}[hadstyle]{OPD, turn 28 (succeeds at turn 28)}
Thought: I need to place the heated mug in the coffee machine.
Action: put mug 2 in/on coffeemachine 1
-> You put the mug 2 in/on the coffeemachine 1.
\end{plist}
\end{minipage}
\caption{ALFWorld failure case, unseen task \texttt{trial\_T20190907\_171933\_349922}. Earlier in the episode, HAD placed an unheated mug on the coffee machine and concluded that the task was done. The harness reports a lack of progress, but HAD repeats \texttt{inventory}, one of the actions it has been cycling through, and fails at the step limit. OPD succeeds, but it is in a different state at this turn, holding a heated mug, so its reply does not answer the same page.}
\label{fig:app-case2}
\end{figure}
\section{Further experiments}
\label{app:further_experiments}

\subsection{Analysis: turns}
\label{app:turns}

\Cref{tab:app-turns} reports the mean number of turns over successful ALFWorld episodes, evaluated with and without the harness.
With the harness, HAD takes a similar number of turns to the other methods while solving the most tasks.
Adding the harness increases the number of turns for most models, including the teacher and the untrained student.

\begin{table}[ht]
\centering
\caption{Mean number of turns over successful ALFWorld episodes.}
\label{tab:app-turns}
\small
\begin{tabular}{lcc}
\toprule
 & With harness & Without harness \\
\midrule
Teacher & 14.10  & 12.99 \\
Student & 13.86 & 9.77 \\
\midrule
OPD & 13.16 & 13.45  \\
SAGE-OPD & 14.71 & 12.85  \\
Guided-OPD & 14.14  & 12.23 \\
SOPD & 15.15 & 12.31  \\
\textbf{HAD} & 14.83  & 13.14  \\
\bottomrule
\end{tabular}
\end{table}

\subsection{Generalization to other model architecture}
\label{app:gemma}

\Cref{tab:app-gemma} reports the full results with Gemma models, including the number of turns and harness utilization.
With the harness, HAD achieves the highest success rate on both splits and the highest harness utilization at 77.9\%, above even the 75.9\% of the teacher.
Every OPD baseline instead ends below the 71.4\% of the untrained student in harness utilization, so adding the harness to standard distillation does not make the Gemma student use it better.
Without the harness, HAD loses more than the baselines on unseen tasks, where SOPD reaches 29.9\% against 23.9\% for HAD, which is consistent with HAD relying on the harness that stays in place at deployment.
HAD also takes more turns in successful episodes than the other methods, while solving the most tasks.

\begin{table}[ht]
\centering
\caption{Full results on ALFWorld with Gemma models (E2B $\leftarrow$ 12B). SR cells report performance \textbf{with harness} \textcolor{gray!90}{\scriptsize/\,without harness}. Turns (mean over successful episodes) and harness utilization (HU) are measured with the harness.}
\label{tab:app-gemma}
\setlength{\aboverulesep}{0pt}
\setlength{\belowrulesep}{0pt}
\setlength{\extrarowheight}{0.15em}
\setlength{\tabcolsep}{0.6em}
\renewcommand{\arraystretch}{1.1}
\small
\begin{tabular}{l*{2}{>{\centering\arraybackslash}p{2.0cm}}*{2}{>{\centering\arraybackslash}p{1.3cm}}}
\toprule
Method & Unseen SR & Seen SR & Turns & HU \\
\midrule
\textit{Teacher} & \hcell{51.49}{43.28} & \hcell{49.29}{42.14} & \hcell{13.91}{13.93} & 75.92 \\
\textit{Student} & \hcell{14.93}{8.21} & \hcell{18.57}{15.71} & \hcell{12.26}{12.92} & 71.38 \\
\midrule
OPD & \hcell{26.87}{18.66} & \hcell{31.43}{29.29} & \hbcell{13.26}{12.97} & 68.76 \\
SAGE-OPD & \hcell{24.63}{20.90} & \hcell{28.57}{27.14} & \hcell{14.30}{13.79} & 67.35 \\
Guided-OPD & \hcell{29.10}{18.66} & \hcell{32.86}{24.29} & \hcell{14.84}{11.58} & 64.05 \\
SOPD & \hcell{32.09}{29.85} & \hcell{34.29}{25.71} & \hcell{13.85}{12.33} & 69.04 \\
\sk \textbf{HAD (ours)} & \sk \hbcell{37.31}{23.88} & \sk \hbcell{36.43}{30.00} & \sk \hcell{15.42}{12.99} & \sk \bf{77.91} \\
\bottomrule
\end{tabular}
\end{table}

\subsection{Additional ablations}
\label{app:ablation-extra}

We report two further ablations on ALFWorld, with all other settings fixed.
\Cref{tab:app-design} compares HAD with a full-response contrast and with the teacher as a reference model, whose definitions are given in \cref{app:had-ablation-variants,app:had-reference-choice} and whose results are discussed in \cref{sec:ablation}.
\Cref{tab:app-rho} varies the preference weight $\rho$ from 0.25 to 2.0, and the success rate is highest at $\rho=0.5$, the value used in all main experiments.

\begin{table}[ht]
\centering
\small
\begin{minipage}[t]{0.55\linewidth}
\centering
\caption{Design choices of the preference term on ALFWorld, with the harness.}
\label{tab:app-design}
\begin{tabular}{lcc}
\toprule
Variant & Unseen SR & Seen SR \\
\midrule
Full-response contrast & 59.7 & 47.1 \\
Teacher as reference & 56.0 & 46.4 \\
\textbf{HAD (ours)} & \textbf{63.4} & \textbf{51.4} \\
\bottomrule
\end{tabular}
\end{minipage}\hfill
\begin{minipage}[t]{0.40\linewidth}
\centering
\caption{Preference weight $\rho$ on ALFWorld, with the harness.}
\label{tab:app-rho}
\begin{tabular}{lcc}
\toprule
$\rho$ & Unseen SR & Seen SR \\
\midrule
0.25 & 53.7 & 47.9 \\
\textbf{0.5} & \textbf{63.4} & \textbf{51.4} \\
1.0 & 49.3 & 50.7 \\
2.0 & 51.5 & 49.3 \\
\bottomrule
\end{tabular}
\end{minipage}
\end{table}

\subsection{Where the harness changes the teacher's action}
\label{app:disagreement}

\begin{figure}[ht]
  \centering
  \includegraphics[width=0.4\linewidth]{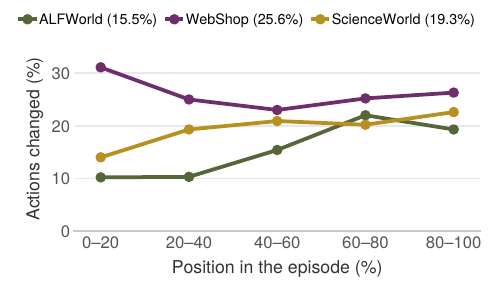}
    \caption{How often the teacher actions with and without the harness differ at each position in the episode.}  
\label{fig:disagreement}
\end{figure}

We measure how often the teacher chooses a different action with and without the harness at the states visited during HAD training.
As \cref{fig:disagreement} shows, the two teacher actions differ at 15.5\% of the states on ALFWorld, 25.6\% on WebShop, and 19.3\% on ScienceWorld.
These states are spread over the whole episode rather than concentrated at its start.

\end{document}